\newcommand{\RR}{\mathbb R}
\newcommand{\PP}{\mathbb P}
\newcommand{\mc}[1]{\mathcal{#1}}
\newcommand{\todo}[1]{\textcolor{red}{TODO #1}}
\newcommand{\derek}[1]{\textcolor{purple}{DL: #1}}
\newcommand{\haggai}[1]{\textcolor{blue}{Haggai: #1}}
\newcommand{\moe}[1]{\textcolor{red}{Moe: #1}}
\newcommand{\sj}[1]{\textcolor{orange}{Stefanie: #1}}
\renewcommand{\derek}[1]{}
\renewcommand{\haggai}[1]{}
\renewcommand{\moe}[1]{}
\renewcommand{\sj}[1]{}
\renewcommand{\todo}[1]{}
\newcommand{\Fmat}{\mathbf{\textcolor{purple}{F}}} %
\newcommand{\purp}[1]{\mathbf{\textcolor{purple}{#1}}}
\newcommand{\relu}{\mathrm{ReLU}}
\newcommand{\gelu}{\mathrm{GELU}}
\newcommand{\swish}{\mathrm{swish}}
\newcommand{\W}{\mathbf{W}}
\newcommand{\nfix}{n_{\mathrm{fix}}}
\def \figpath {figures/}
\newtheorem{lemma}{Lemma}
\newtheorem{theorem}{Theorem}
\newtheorem{proposition}{Proposition}
\theoremstyle{definition}
\theoremstyle{definition}
\theoremstyle{definition}
\newtheorem{definition}{Definition}[section]
\title{The Empirical Impact of Neural Parameter Symmetries, or Lack Thereof}
\author{%
  Derek Lim\thanks{Equal contribution} \\
    MIT CSAIL \\
  \texttt{dereklim@mit.edu} \\
   \And
   Theo (Moe) Putterman\footnotemark[1] \\
   UC Berkeley \\
   \texttt{moeputterman@berkeley.edu}\\
   \AND
   Robin  Walters \\
   Northeastern University \\
   \And
   Haggai Maron \\
   Technion, NVIDIA
   \And
   Stefanie Jegelka \\
   TU Munich, MIT
}
\begin{document}

\maketitle

\begin{abstract}
Many algorithms and observed phenomena in deep learning appear to be affected by parameter symmetries --- transformations of neural network parameters that do not change the underlying neural network function. These include linear mode connectivity, model merging, Bayesian neural network inference, metanetworks, and several other characteristics of optimization or loss-landscapes. However, theoretical analysis of the relationship between parameter space symmetries and these phenomena is difficult. In this work, we empirically investigate the impact of neural parameter symmetries by introducing new neural network architectures that have reduced parameter space symmetries. We develop two methods, with some provable guarantees, of modifying standard neural networks to reduce parameter space symmetries. 
With these new methods, we conduct a comprehensive experimental study consisting of multiple tasks aimed at assessing the effect of removing parameter symmetries. Our experiments reveal several interesting observations on the empirical impact of parameter symmetries; for instance, we observe linear mode connectivity between our networks without alignment of weight spaces, and we find that our networks allow for faster and more effective Bayesian neural network training. Our code is available at \url{https://github.com/cptq/asymmetric-networks}.
\end{abstract}

\section{Introduction}

Neural networks have found profound empirical success, but have many associated behaviors and phenomena that are difficult to understand. One important property of neural networks is that they generally have many \textit{parameter space symmetries} --- for any set of parameters, there are typically many other choices of parameters that correspond to the same exact neural network function~\citep{hecht1990algebraic}. For instance, permutations of hidden neurons in a multi-layer perceptron (MLP) induce permutations of weights that leave the overall input-output relationship unchanged. These parameter symmetries are a type of (not-necessarily detrimental) redundancy in the parameterization of neural networks, that adds much non-Euclidean structure to parameter space.

Parameter space symmetries appear to influence several phenomena observed in neural networks. For example, when linearly interpolating between the parameters of two independently trained networks with the same architecture, the intermediate networks typically perform poorly~\citep{singh2020model, entezari2022the}. However, if we first align the two networks via a permutation of parameters that does not affect the network function, then the intermediate networks can perform just as well as the unmerged networks~\citep{singh2020model,ainsworth2023git}. In some sense, this suggests that neural network loss landscapes are more convex or well-behaved after removing permutation symmetries.
Other areas that parameter symmetries play a role in include interpretability of neurons~\citep{godfrey2022symmetries},  
optimization~\citep{neyshabur2015path, ziyin2023symmetry, zhao2024improving}, model merging~\citep{stoica2024zipit}, learned equivariance~\citep{bokman2023investigating}, Bayesian deep learning~\citep{kurle2022detrimental}, loss landscape geometry~\citep{pittorino2022deep}, processing neural network weights as input data using metanetworks~\citep{lim2023graph}, and generalization measures~\citep{neyshabur2015norm, dinh2017sharp}.

\begin{figure}
    \centering
    \includegraphics[width=.8\columnwidth]{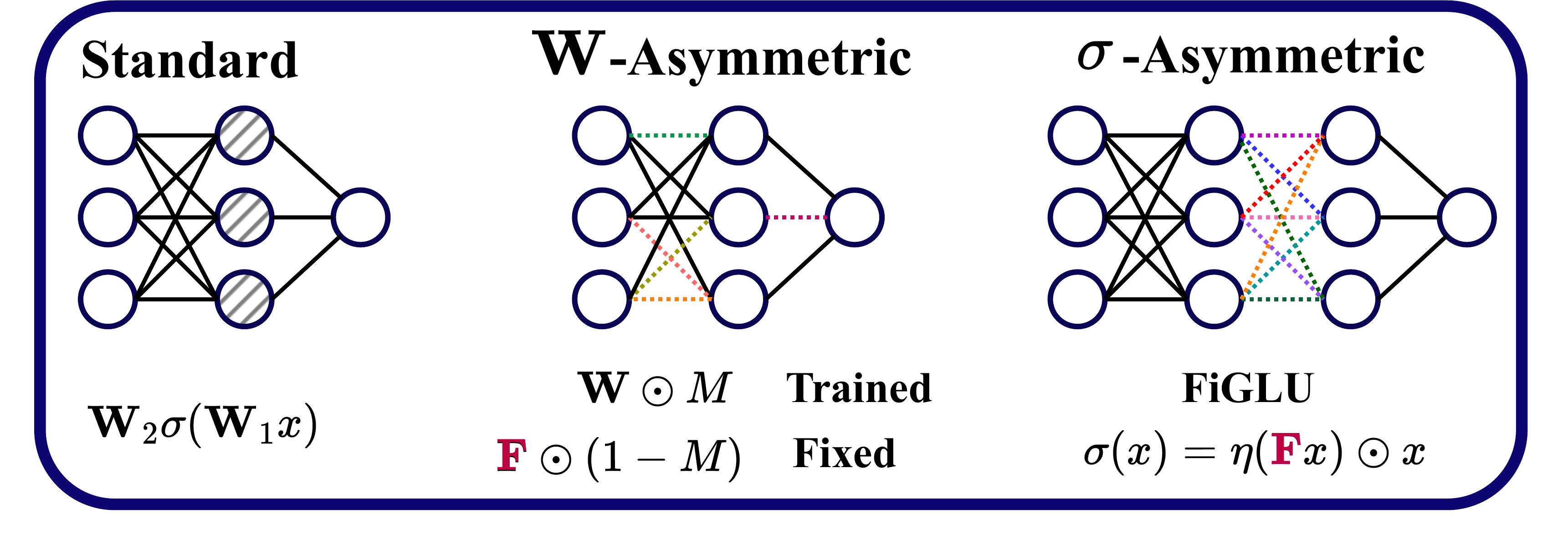}
    \vspace{-10pt}
    \caption{
    (Left) Standard MLP. The hidden nodes (grey hatches) can be freely permuted, which induces permutation parameter symmetries. Black edges denote trainable parameters. (Middle) Our $\W$-Asymmetric MLP, which fixes certain weights to be constant and untrainable (colored dashed lines) to break parameter symmetries. (Right) Our $\sigma$-Asymmetric MLP, which uses our FiGLU nonlinearity involving a fixed matrix $\Fmat$ (colored dashed lines) to break parameter symmetries.}
    \label{fig:main_diagram}
    \vspace{-10pt}
\end{figure}

To rigorously study the effect of parameter symmetries, we study the effect of removing them. In particular, we introduce two ways of modifying neural network architectures to remove parameter space symmetries (see Figure~\ref{fig:main_diagram}):
\begin{itemize}[noitemsep]
    \item[(1)] $\W$-Asymmetric networks fix certain elements of each linear map to break symmetries in the computation graph.
    \item[(2)] $\sigma$-Asymmetric networks use a new nonlinearity (FiGLU) that does not act elementwise, and hence does not induce symmetries such as permutations.
\end{itemize}
These two approaches are inspired by previous work, which shows that both symmetries of computation graphs~\citep{lim2023graph} and equivariances of nonlinearities~\citep{godfrey2022symmetries} induce parameter symmetries in standard neural networks. We theoretically prove that both of our approaches remove parameter symmetries under certain conditions. Our Asymmetric networks are similar structurally to standard networks and can be trained with standard backpropagation and first-order optimization algorithms like Adam. Thus, they are a reasonable ``counterfactual'' system for studying neural networks that are similar to standard neural networks, but that do not have as many parameter symmetries.

With our Asymmetric networks, we run a suite of experiments to study the effects of removing parameter symmetries on several base architectures, including MLPs, ResNets, and graph neural networks. We investigate linear mode connectivity, Bayesian deep learning, metanetworks, and monotonic linear interpolation. Through the lenses of linear mode connectivity and monotonic linear interpolation, we see that the loss landscapes of our Asymmetric networks are remarkably more well-behaved and closer to convex than the loss landscapes of standard neural networks. When using our Asymmetric networks as the base model in a Bayesian neural network, we find faster training and better performance than using standard neural networks that have many parameter symmetries. When using metanetworks to predict properties such as test accuracy of an input neural network, we see that all tested metanetworks more accurately predict the accuracy of Asymmetric networks than standard networks. 
Overall, our Asymmetric networks provide valuable insights for empirical study and hold promise for advancing our understanding of the impact of neural parameter symmetries.

\section{Background and Definitions}\label{sec:background}

Let $\Theta$ be the space of parameters of a fixed neural network architecture. For any choice of parameters $\theta \in \Theta$, we have a neural network function $f_\theta: \mc X \to \mc Y$ from an input space $\mc X$ to an output space $\mc Y$. We call a function $\phi: \Theta \to \Theta$ a \emph{parameter space symmetry} if $f_\theta(x) = f_{\phi(\theta)}(x)$ for all inputs $x$ and parameters $\theta \in \Theta$ (i.e. if $f_\theta$ and $f_{\phi(\theta)}$ are always the same function).

For instance, consider a two-layer MLP with no biases, parameterized by matrices $\theta = (\W_2, \W_1)$ with an elementwise nonlinearity $\sigma$. Then $f_\theta(x) = \W_2 \sigma(\W_1 x)$. Let $P$ be a permutation matrix, and let $\phi(\theta) = (\W_2 P^\top, P \W_1)$. Then for any input $x$, 
\begin{equation}
    f_{\phi(\theta)}(x) = \W_2P^\top \sigma(P \W_1 x) = \W_2 P^\top P \sigma(\W_1 x) = \W_2 \sigma(\W_1x) = f_\theta(x),
\end{equation}
so $\phi$ is a parameter space symmetry. A key step is the second equality, which holds because $P\sigma(x) = \sigma(Px)$: any elementwise nonlinearity $\sigma$ is permutation equivariant. Any other equivariance of $\sigma$ also induces a parameter symmetry; for instance, if $\sigma(x) = \max(0, x)$ is the ReLU function, then $\alpha \sigma(x) = \sigma(\alpha x)$ for any $\alpha > 0$, so there is a positive-scaling-based parameter symmetry~\citep{neyshabur2015norm, dinh2017sharp,godfrey2022symmetries}.

\section{Related Work}\label{sec:related_work}

\textbf{Characterizing parameter space symmetries.} While many works spanning several decades have noted specific parameter space symmetries in neural networks~\citep{hecht1990algebraic,sussmann1992uniqueness}, some works take a more systematic approach to deriving parameter space symmetries. \citet{godfrey2022symmetries} characterize all global linear symmetries induced by the nonlinearity for two-layer multi-layer perceptrons with pointwise nonlinearities. \citet{zhao2022symmetries} study several types of symmetries, and derive nonlinear, data-dependent parameter space symmetries. \citet{lim2023graph} show that graph automorphisms of the computation graph of a neural network induce permutation parameter symmetries, which captures hidden neuron permutations in MLPs and hidden channel permutations in CNNs.

\textbf{Constraints and post-processing to break parameter space symmetries.} 
Several works develop methods for constraining or post-processing the weights of a single neural network to remove ambiguities from parameter space symmetries.
This includes methods to remove scaling symmetries induced by normalization layers or positively-homogeneous nonlinearities such as $\mathrm{ReLU}$~\citep{badrinarayanan2015understanding, pourzanjani2017improving,pittorino2022deep,laurent2024a}, methods to remove permutation symmetries~\citep{pourzanjani2017improving,pittorino2022deep, wiese2023towards, laurent2024a}, and methods to remove sign symmetries induced by odd activation functions~\citep{wiese2023towards}.

Unlike these previous works, we develop neural network architectures that have reduced parameter space symmetries. Our models are optimized using standard unconstrained gradient-descent based methods like Adam. Hence, our networks do not require any non-standard optimization algorithms such as manifold optimization or projected gradient descent~\citep{badrinarayanan2015understanding,pourzanjani2017improving}, nor do they require post-training-processing to remove symmetries or special care during analysis of parameters (such as geodesic interpolation in a Riemannian weight space~\citep{pittorino2022deep}). These methods based on constraining weights or post-processing have significantly different optimization and loss landscape properties (for instance, linear interpolation is not even well-defined on general nonlinear parameter manifolds), so they are less suitable than our Asymmetric networks for studying phenomena that may generalize to standard neural networks.

\textbf{Aligning multiple networks for relative invariance to parameter symmetries.} One way to reduce the impact of parameter symmetries in certain settings, especially for model merging, is to align the parameters of one network to another. Several methods have been proposed for choosing permutations that align the parameters of two neural networks of the same architecture, via efficient heuristics or learned methods~\citep{ashmore2015method, wang2020federated, tatro2020optimizing, entezari2022the,ainsworth2023git, pena2023re, navon2023equivariant_alignment, verma2024merging}. Other approaches relax the exact permutation-parameter-symmetry constraint or do additional postprocessing to achieve effective merging of models in parameter space~\citep{singh2020model,imfeld2023transformer,jordan2023repair, stoica2024zipit, qu2024rethink}. As our Asymmetric networks have removed parameter symmetries, they can often be successfully merged and linearly interpolated between without any alignment.

\section{Asymmetric Networks}

We develop two methods of parameterizing neural network architectures without parameter symmetries, both of which are justified by theoretical results. We can prove that our $\sigma$-Asym networks have permutation and scale symmetries removed, and that our $\W$-Asym networks have permutation symmetries removed. Although we have not formally proven that $\W$-Asym networks have scale symmetries removed, we believe that they do (intuitively, the fixed weights fix a scale).

We first focus on the case of fully-connected MLPs with no biases, which take the form $f_\theta(x) = \W_L \sigma( \W_{L-1} \cdots \sigma(\W_1 x))$ for weights $\theta = (\W_{L}, \ldots, \W_1)$ and nonlinearity $\sigma$. Then in Section~\ref{sec:other_archs}, we discuss how we use these approaches to remove parameter symmetries in other architectures (e.g. CNNs, GNNs) as well.

\subsection{Computation Graph Approach ($\W$-Asymmetric Networks)}

Our first approach to developing neural networks with greatly reduced parameter space symmetries relies on their computation graph. In particular, we can write a feedforward neural network architecture as a DAG $G = (V, E)$ with neurons as nodes $V$ and connections between them as edges $E$. For a choice of parameters $\theta \in \RR^{|E|}$, we get a function $f_\theta$ from input neuron space to output neuron space~\citep{gegout1995mathematical, neyshabur2015norm, lim2023graph} 

\citet{lim2023graph} showed that neural DAG automorphisms $\phi$, which are graph automorphisms of the DAG $G$ that preserve types of nodes and weight-sharing constraints, induce permutation parameter symmetries $\phi$ that leave the function unchanged: $f_\theta = f_{\phi(\theta)}$. Thus, any feedforward neural network architecture that has no permutation parameter symmetries must necessarily have a computation graph with no nontrivial neural DAG automorphisms.

To modify MLPs so they have no nontrivial neural DAG automorphisms, we mask edges in the computation graph, by setting certain edge weights to constant values that are not updated during training. For an MLP, we can do this by enforcing that every linear layer $T: \RR^{d_1} \to \RR^{d_2}$ takes the form of a matrix $\W \in \RR^{d_2 \times d_1}$, where each row has a unique pattern of untrained weights. To achieve this, define a mask $M \in \{0,1\}^{d_2 \times d_1}$ such that $\W_{ij}$ is a trainable parameter if and only if $M_{ij} = 1$, and the rows of $M$ are pairwise distinct binary vectors in $\{0,1\}^{d_1}$. We call any neural network with linear maps masked as such a $\W$-Asymmetric neural network.
In Appendix~\ref{appendix:graph_proofs}, we show that masking these entries so that they are not trained is sufficient to remove all nontrivial neural DAG automorphisms.
\begin{theorem}
    If each mask matrix $M$ has unique nonzero rows, then $\W$-Asymmetric MLPs with fixed entries set to zero have no nontrivial neural DAG automorphisms.
\end{theorem}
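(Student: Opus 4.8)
The plan is to show that every neural DAG automorphism $\phi$ of the computation graph of a $\W$-Asymmetric MLP fixes each neuron, and is hence trivial. Model the architecture as a layered DAG with neuron sets $V_0, V_1, \ldots, V_L$ (inputs, hidden layers, outputs); the only edges run from $V_{\ell-1}$ to $V_\ell$, and the edge from $j \in V_{\ell-1}$ to $i \in V_\ell$ is present exactly when $M^{(\ell)}_{ij} = 1$, since the fixed entries are set to zero and contribute no edge.

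First I would extract the structural consequences of $\phi$ being a neural DAG automorphism. Since $\phi$ preserves node types --- in particular the layer of each neuron together with the distinguished input and output neurons --- it restricts to a permutation $\pi_\ell$ of $V_\ell$ for each $\ell$, with $\pi_0 = \mathrm{id}$ and $\pi_L = \mathrm{id}$. (If layer-preservation is not built into the definition, it follows from $\phi$ respecting edge orientations and fixing the input and output neurons, as no edge enters $V_\ell$ except from $V_{\ell-1}$.) Preservation of the edge set then amounts to the identity $M^{(\ell)}_{ij} = M^{(\ell)}_{\pi_\ell(i)\,\pi_{\ell-1}(j)}$ for every $\ell$ and all $i \in V_\ell$, $j \in V_{\ell-1}$.

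Second I would induct on $\ell$ to show $\pi_\ell = \mathrm{id}$ for all $\ell$. The base case $\pi_0 = \mathrm{id}$ is given. Assuming $\pi_{\ell-1} = \mathrm{id}$, the edge-preservation identity collapses to $M^{(\ell)}_{ij} = M^{(\ell)}_{\pi_\ell(i)\,j}$ for all $i,j$, i.e.\ rows $i$ and $\pi_\ell(i)$ of $M^{(\ell)}$ coincide. Since the rows of $M^{(\ell)}$ are pairwise distinct --- its nonzero rows are distinct, and at most one row is zero --- this forces $\pi_\ell(i) = i$ for every neuron with a nonzero mask row, and the at-most-one neuron with an all-zero row is then fixed too, because a permutation fixing every other element of $V_\ell$ must fix it. This closes the induction, so $\phi$ is the identity.

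The main obstacle is the bookkeeping in the first step: pinning down precisely how setting the fixed entries to zero interacts with the definition of a neural DAG automorphism, so as to justify both that $\phi$ is layer-wise and that edge preservation yields the clean identity $M^{(\ell)}_{ij} = M^{(\ell)}_{\pi_\ell(i)\,\pi_{\ell-1}(j)}$. Once this is in place the combinatorial core --- distinct rows force the within-layer permutation to be trivial, and this propagates outward from the fixed input layer --- is routine. A secondary point to treat carefully is the all-zero-row case, dispatched above via the observation that a permutation fixing all but one element fixes the last one; this is exactly why it suffices that no two rows of any $M^{(\ell)}$ agree.
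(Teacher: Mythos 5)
Your argument is, at its core, the paper's own proof: first establish layer preservation, then induct outward from the fixed input layer, using edge preservation together with distinctness of mask rows to force each within-layer permutation $\pi_\ell$ to be the identity. The paper's block-matrix equation $PAP^\top = A$ and your index identity $M^{(\ell)}_{ij} = M^{(\ell)}_{\pi_\ell(i)\,\pi_{\ell-1}(j)}$ are the same statement, and the inductive step (distinct rows $\Rightarrow$ $\pi_\ell = \mathrm{id}$, propagated from $\pi_0 = \mathrm{id}$) is identical.

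There are two places where you underspecify what the paper nails down, and they are linked. First, layer preservation is not free, and the paper proves it as Lemma~\ref{lem:layer_preserve} using precisely the fact that every mask row is \emph{nonzero}: every hidden and output neuron then has at least one in-edge from the preceding layer, which is what makes the layer-membership induction close. If $j \in V_{\ell+1}$ has a predecessor $i \in V_\ell$ and, by the inductive hypothesis, $\tau(i) \in V_\ell$, then $\tau(j)$ has a predecessor in $V_\ell$ and hence lies in $V_{\ell+1}$. Your parenthetical (``it follows from $\phi$ respecting edge orientations \ldots\ as no edge enters $V_\ell$ except from $V_{\ell-1}$'') describes the DAG but does not give this argument; in particular it does not rule out $\tau$ moving an in-degree-zero hidden neuron across layers. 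Second, your accommodation of an all-zero mask row is not a harmless generalization but is in tension with the step above: a zero-row neuron has in-degree zero, is not anchored to its layer by incoming edges, and the layer-preservation induction fails at exactly that neuron. The hypothesis ``unique nonzero rows'' should be read as ``rows are pairwise distinct and all nonzero,'' which is what Lemma~\ref{lem:layer_preserve} requires; your zero-row side-case only patches the within-layer step, not the layer-preservation step where the real issue arises, and should be dropped. (Note also that ``the nonzero rows are distinct'' would not by itself imply ``at most one row is zero,'' as you assert.)
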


In practice, we generate a binary mask $M$ by randomly selecting a subset of $\nfix$ fixed elements for each row. For the fixed entries, we sample them from a normal distribution $\mathcal N(0, \kappa I)$ with standard deviation $\kappa > 0$ that is a hyperparameter that we tune. Our asymmetric linear layer can be written as
\begin{equation}
    \W' = M \odot \W + (1-M) \odot \Fmat,
\end{equation}
where $\W \in \RR^{d_2 \times d_1}$ is a matrix of trainable parameters, and $\Fmat \in \RR^{d_2 \times d_1}$ is a matrix of fixed elements, sampled from $\mathcal N(0, \kappa I)$. The only trainable parameters are the unmasked entries of $M \odot \W$, of which there are $d_2 \cdot (d_1-\nfix)$. We empirically find that having $\kappa$ be significantly larger than the standard deviation of typical initializations for weight matrices (e.g. $\kappa=1$ while the trained coefficients have standard deviation about $1/\sqrt{1000}$) is important for breaking parameter symmetries. \derek{add appendix section about this?}

\begin{figure}
    \centering
    \includegraphics[width=.75\columnwidth]{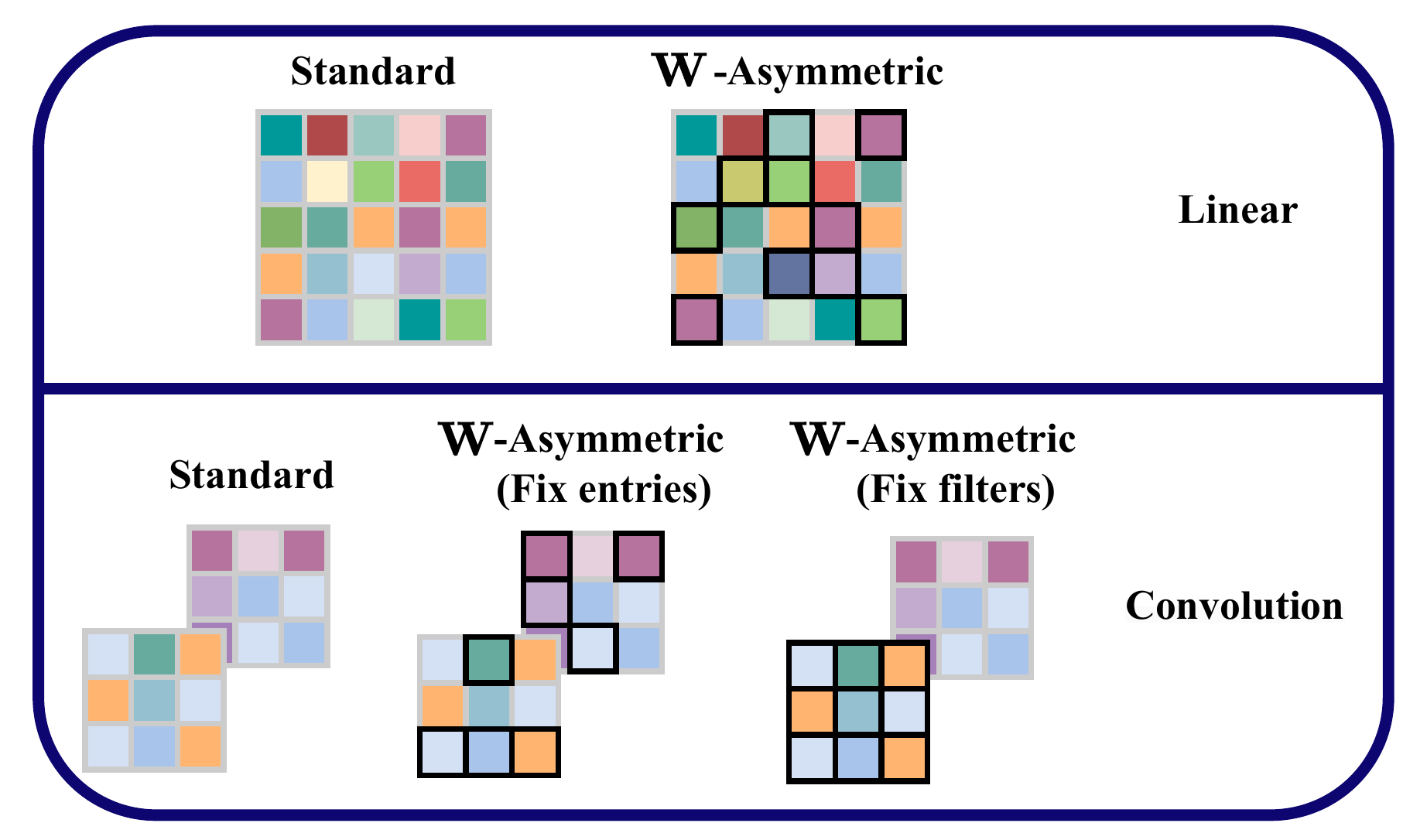}
    \vspace{-5pt}
    \caption{Depiction of our $\W$-Asymmetric approach to removing parameter symmetries. Entries with a black outline are untrained. Note that the $\W$-Asym linear map has 2 nonzeros per row, the $\W$-Asym convolution with fixed entries has 8 fixed entries for its single output channel, and the $\W$-Asym convolution with fixed filters has a single input filter fixed. We often use a constant number of fixed entries per row or output channel in our experiments.}
    \label{fig:AsymmWeight}
    \vspace{-10pt}
\end{figure}

\subsection{Nonlinearity Approach ($\sigma$-Asymmetric Networks)}

Another approach for removing parameter symmetries is to change the nonlinearity. As studied by \citet{godfrey2022symmetries}, equivariances of the nonlinearity induce parameter symmetries in MLPs with elementwise nonlinearities. Recall that an elementwise nonlinearity acts by using the same function on each coordinate of the input; $\sigma: \RR^d \to \RR^d$ is elementwise if it takes the form $\sigma(x) = (\sigma_1(x_1), \ldots, \sigma_1(x_d))$ for some real function $\sigma_1: \RR \to \RR$. Any elementwise nonlinearity is permutation equivariant, and hence induces a permutation parameter symmetry.

Thus, in contrast to most neural network architectures, for Asymmetric networks we must use a nonlinearity that does not act elementwise. Likewise, the nonlinearity cannot have any linear symmetry itself, since if $\sigma \circ A = B \circ \sigma$ for $A, B \in GL(d)$, then for a two-layer network:
\begin{equation}
\W_2 \circ \sigma \circ \W_1 = \W_2 B^{-1} B \circ \sigma \circ \W_1 = \W_2 B^{-1} \circ \sigma \circ A \W_1.
\end{equation}
So $(\W_2, \W_1)$ and $(\W_2 B^{-1}, A\W_1)$ give the same neural network function. Thus, in order to define a model class without parameter symmetries, it is necessary for $\sigma$ to have \emph{no linear equivariances}, i.e. we desire that if $\sigma \circ A = B \circ \sigma$ for $A, B \in GL(d)$, then $A = B = I$. For two-layer MLPs with square invertible weights, this is in fact sufficient to remove all parameter symmetries: we prove this in Appendix~\ref{appendix:nonlinearity_proofs}.
\begin{proposition}\label{prop:sigma_symmetry}
    Let the parameter space $\Theta$ be all pairs of square invertible matrices $\theta = (\W_2, \W_1)$ for $\W_2, \W_1 \in GL(d)$, and let $f_\theta(x) = \W_2 \sigma (\W_1 x)$. If $\sigma$ has no linear equivariances, then $f_{\theta_1} = f_{\theta_2}$ if and only if $\theta_1 = \theta_2$. In other words, there are no nontrivial parameter space symmetries.
\end{proposition}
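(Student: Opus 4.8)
The plan is to reduce the hypothesis $f_{\theta_1} = f_{\theta_2}$ directly to a linear equivariance of $\sigma$ and then invoke the assumption that $\sigma$ has none. The implication $\theta_1 = \theta_2 \Rightarrow f_{\theta_1} = f_{\theta_2}$ is immediate from the definition of $f_\theta$, so the content is the converse.

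Write $\theta_1 = (\W_2, \W_1)$ and $\theta_2 = (\W_2', \W_1')$ with all four matrices in $GL(d)$. The assumption $f_{\theta_1}(x) = f_{\theta_2}(x)$ for all $x$ reads $\W_2\, \sigma(\W_1 x) = \W_2'\, \sigma(\W_1' x)$ for all $x \in \RR^d$. Since $\W_2'$ is invertible, left-multiplying by $(\W_2')^{-1}$ gives $(\W_2')^{-1}\W_2\, \sigma(\W_1 x) = \sigma(\W_1' x)$. Now substitute $y = \W_1 x$: because $\W_1 \in GL(d)$, as $x$ ranges over $\RR^d$ so does $y$, with $x = \W_1^{-1} y$. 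This yields $B\, \sigma(y) = \sigma(A y)$ for every $y \in \RR^d$, where $B := (\W_2')^{-1}\W_2 \in GL(d)$ and $A := \W_1'\W_1^{-1} \in GL(d)$.

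This identity is exactly $\sigma \circ A = B \circ \sigma$, i.e. a linear equivariance of $\sigma$. By hypothesis $\sigma$ has no linear equivariances, so $A = B = I$. From $A = I$ we get $\W_1' = \W_1$, and from $B = I$ we get $\W_2 = \W_2'$; hence $\theta_1 = \theta_2$, as desired.

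I do not expect a real obstacle here: the two invertibility assumptions in the definition of $\Theta$ are used in precisely the two places one needs them — invertibility of $\W_2'$ to isolate $\sigma(\W_1 x)$ on one side, and invertibility of $\W_1$ to make the change of variables surjective onto all of $\RR^d$ — and the no-equivariance hypothesis then closes the argument immediately. The only thing to be careful about is bookkeeping: getting the placement of $A$ and $B$ right so that the derived relation $B\sigma = \sigma A$ matches the paper's convention $\sigma \circ A = B \circ \sigma$ for an equivariance. It is also worth remarking (at least informally) that the square-invertible restriction on $\Theta$ is genuinely needed for the converse, since if $\W_1$ is allowed to be rank-deficient the change of variables fails and additional parameter collapse can occur.
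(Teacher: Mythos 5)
Your argument is correct and matches the paper's proof essentially verbatim: both isolate $\sigma$ on one side by left-multiplying with $(\W_2')^{-1}$, substitute $x \mapsto \W_1^{-1}x$ to produce the equivariance relation $B \circ \sigma = \sigma \circ A$ with $B = (\W_2')^{-1}\W_2$ and $A = \W_1'\W_1^{-1}$, and then invoke the no-linear-equivariance hypothesis to conclude $A = B = I$. No gaps.
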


\subsubsection{FiGLU: the Fixed Gated Linear Unit Nonlinearity}

Motivated by Proposition~\ref{prop:sigma_symmetry}, we define a non-elementwise nonlinearity that does not have the equivariances of standard nonlinearities. Letting $\eta$ be the sigmoid function $\eta(x) = \frac{1}{1+e^{-x}}$, we define our nonlinearity as
\begin{equation}
    \sigma(x) = \eta(\Fmat x) \odot x,
\end{equation}
for a randomly sampled, untrained matrix $\Fmat$. Similarly to $\W$-Asym nets, we sample $\Fmat$ as an i.i.d Gaussian matrix with variance that we tune. This nonlinearity is similar to Swish / SiLU~\citep{ramachandran2017searching, hendrycks2016gaussian} with an additional matrix $\Fmat$ to mix feature dimensions (to break permutation equivariance), and it is also similar to a gated linear unit (GLU) with no trainable parameters~\citep{dauphin2017language}. Thus, we call our nonlinearity FiGLU: the Fixed Gated Linear Unit.

In Appendix~\ref{appendix:figlu}, we prove that FiGLU does not have permutation equivariances or diagonal equivariances, which are the only equivariances for most elementwise nonlinearities~\citep{godfrey2022symmetries}.
\begin{proposition}\label{prop:figlu_no_perm_diag}
    With probability $1$ over the sampling of $\Fmat$, FiGLU has no permutation equivariances or diagonal equivariances.
\end{proposition}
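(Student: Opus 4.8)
The plan is to handle permutation equivariances and diagonal equivariances separately, in each case writing the defining identity $\sigma \circ A = B \circ \sigma$ coordinate-by-coordinate and then peeling it apart in two stages: first strip off the ``linear factor'' $x_i$ that multiplies the sigmoid gate in FiGLU, then strip off the gate itself. The two facts about $\eta$ that make this work are that $\eta$ is everywhere strictly positive and that $\eta$ is continuous and injective (strictly increasing). The first stage uses positivity to pin down the combinatorial/scaling data of $A$ and $B$; the second stage uses injectivity to reduce a gate equality to an equality of linear functionals, hence to a rigid algebraic condition on the fixed matrix $\Fmat$. A nondegenerate absolutely continuous law such as an i.i.d.\ Gaussian puts probability $0$ on that condition whenever it is nontrivial, and there are only finitely many cases to rule out, so a union bound finishes it.

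\textbf{Permutation case.} Suppose $\sigma(Px)=Q\sigma(x)$ for all $x\in\RR^d$, where $P,Q$ are the permutation matrices of $\pi,\rho$. Coordinate $i$ of this identity reads $\eta\!\big(\sum_k F_{i,\pi^{-1}(k)}x_k\big)\,x_{\pi(i)}=\eta\!\big(\sum_j F_{\rho(i),j}x_j\big)\,x_{\rho(i)}$ for all $x$. If $\pi(i)\neq\rho(i)$ these are distinct coordinates, and choosing $x$ with $x_{\pi(i)}=0$ and $x_{\rho(i)}=1$ makes the left side $0$ while the right side is $\eta(\cdot)>0$ by positivity of the sigmoid --- a contradiction. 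Hence $\pi=\rho$, so $P=Q$. Cancelling the common factor $x_{\pi(i)}$ on the open dense set where it is nonzero and extending by continuity, the two gates agree as functions of $x$; injectivity of $\eta$ then forces the linear functionals to coincide, i.e.\ $F_{i,\pi^{-1}(k)}=F_{\pi(i),k}$ for all $i,k$, equivalently $F_{i,l}=F_{\pi(i),\pi(l)}$ for all $i,l$ (so $\Fmat$ is invariant under conjugation by $P$). If $\pi\neq\mathrm{id}$, pick $i_0$ with $\pi(i_0)\neq i_0$; then the single constraint $F_{i_0,i_0}=F_{\pi(i_0),\pi(i_0)}$ equates two distinct entries of $\Fmat$, cutting out a proper affine subspace of the matrix space, hence a null set. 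Taking the union over the finitely many $\pi\neq\mathrm{id}$ shows that, almost surely, the only permutation equivariance has $P=Q=I$.

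\textbf{Diagonal case.} Suppose $\sigma(D_1x)=D_2\sigma(x)$ for invertible diagonal $D_1=\mathrm{diag}(a_1,\dots,a_d)$ and $D_2=\mathrm{diag}(b_1,\dots,b_d)$. Coordinate $i$ reads $a_i\,\eta\!\big(\sum_j F_{ij}a_jx_j\big)\,x_i=b_i\,\eta\!\big(\sum_j F_{ij}x_j\big)\,x_i$. Cancelling $x_i$ where it is nonzero and specializing to $x=t e_i$ with $t\to 0$, $t\neq 0$, yields $a_i\eta(0)=b_i\eta(0)$ by continuity, so $a_i=b_i$ for every $i$, i.e.\ $D_1=D_2$. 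With the scalars cancelled, the gates agree for all $x$, so injectivity of $\eta$ gives $\sum_j F_{ij}a_jx_j=\sum_j F_{ij}x_j$ for all $x$, hence $F_{ij}a_j=F_{ij}$ for every $i,j$. Since every entry of a Gaussian $\Fmat$ is nonzero almost surely (already using just $F_{ij}\neq 0$ suffices), this forces $a_j=1$ for all $j$, so $D_1=D_2=I$.

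\textbf{Main obstacle.} The calculations are routine; the points that need care are (i) that each coordinate identity, after dividing by $x_i$, is only valid on the locus $\{x_i\neq 0\}$ and must be extended by continuity/density before invoking injectivity of $\eta$, and (ii) spelling out that the residual constraints on $\Fmat$ --- conjugation-invariance under a nontrivial permutation, or a nontrivial coordinate rescaling fixing the row space --- each cut out a finite union of proper affine subspaces of $\RR^{d\times d}$, hence a Lebesgue-null set, so that the i.i.d.\ Gaussian law of $\Fmat$ assigns it probability $0$. These are the expected ``generic position'' arguments, which the write-up in Appendix~\ref{appendix:figlu} should make explicit but can keep brief.
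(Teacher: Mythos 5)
Correct, and essentially the same approach as the paper's proof in Appendix~\ref{appendix:figlu}: strict positivity of the sigmoid gate forces the two permutations (resp.\ the two diagonals) to coincide, injectivity of $\eta$ then collapses the gate identity to a linear constraint on $\Fmat$, and a generic-position argument rules out any nontrivial solution with probability $1$. Your $t\to 0$ limit in the diagonal case is a small simplification over the paper's $c\to\infty$ ratio-and-sign argument, and your explicit ``finite union of hyperplanes'' phrasing makes the probability-$1$ claim cleaner than the paper's appeal to the sufficient condition of $\Fmat$ having distinct, nonzero entries.
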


We call any network with our symmetry-breaking FiGLU nonlinearity a $\sigma$-Asymmetric Network.

\subsection{Extension to Other Architectures}\label{sec:other_archs}

The graph-based approach ($\W$-Asymmetric Networks) works naturally for neural network architectures with ``channel'' dimensions, such as convolutional neural networks (CNNs), graph neural networks (GNNs)~\citep{hamilton2020graph}, Transformers~\citep{vaswani2017attention}, and equivariant neural networks based on equivariant linear maps~\citep{finzi2021practical}. In these types of networks, permutations of entire channels induce permutation parameter symmetries~\citep{lim2023graph}.
For such networks, we thus mask entire connections between channels, e.g. entire filters in CNNs.
For CNNs, we also experiment with randomly masking some number of entries in each filter (instead of masking entire filters), and find that this also works well in removing parameter symmetries.
For neural networks with linear layers that include bias terms, we do not modify the biases in any way, as they do not introduce new computation graph automorphisms~\citep{lim2023graph}.

The nonlinearity-based approach ($\sigma$-Asymmetric Networks) can be straightforwardly applied to many general architectures as well. Though, the fixed matrix $\Fmat$ may have to be changed to a structured linear map; for instance, in CNNs we take $\Fmat$ to be a 1D convolution.

\subsection{Universal Approximation}

Our two approaches remove parameter symmetries from standard neural networks, but still intuitively retain much of the structure of standard networks. One important property of widely-used neural network architectures is universal approximation --- for any target function of a certain type, there exists a neural network of the given architecture that approximates the target to an arbitrary accuracy~\citep{cybenko1989approximation,hecht1992theory, maron2019universality, Yun2020Are}. In Appendix~\ref{appendix:universality_proofs}, we show that $\W$-Asymmetric MLPs retain this property:
\begin{theorem}[Informal]\label{thm:universal}
    For $\nfix \in o(n^{\frac{1}{4}})$, where $n$ is the hidden dimension, $\W$-Asymmetric MLPs are Universal Approximators with probability $1$ over the choice of hardwired entries.
\end{theorem}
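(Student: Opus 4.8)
The plan is to derive universality of $\W$-Asymmetric MLPs from the classical universal approximation theorem for ordinary networks, by showing that with probability $1$ over the hardwired matrices $\Fmat$ any ordinary shallow network can be reproduced to arbitrary uniform accuracy on a compact set by a $\W$-Asymmetric MLP of suitable (larger) width and depth. Fix a continuous target $g$ on a compact $K\subseteq\RR^{d_0}$ and $\epsilon>0$; by Cybenko / Leshno--Pinkus there is an ordinary one-hidden-layer network $\hat g(x)=\sum_{k=1}^{H}c_k\,\sigma(a_k^\top x+\beta_k)$ with $\|\hat g-g\|_{\infty,K}<\epsilon/2$ and $H$ now fixed, so it remains to realize $\hat g$ within $\epsilon/2$ by a $\W$-Asymmetric MLP.

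The crux is the first layer $\W_1'=M_1\odot\W_1+(1-M_1)\odot\Fmat_1$: its $\nfix$ hardwired entries per row act directly on the raw input, with nothing upstream to cancel them, and one can check that a single direction-constrained linear layer cannot produce an arbitrary preactivation $a^\top x+\beta$ (the achievable direction vectors lie in a finite union of affine subspaces that, once $\nfix\ge2$, is contained in the zero set of a homogeneous polynomial, which obstructs ridge-function density). I would therefore not ask the first layer for anything specific: choose its biases and trainable entries so that $x\mapsto y:=\sigma(\W_1'x+b^{(1)})$ is \emph{injective} on $K$, which holds for almost every $\Fmat_1$ as soon as $\W_1'$ has full column rank (arrangeable once the hidden width is at least $d_0$ and $\sigma$ is strictly monotone). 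Then $g=(g\circ\Phi^{-1})\circ\Phi$ on $K$, and since $\Phi$ is a homeomorphism onto the compact set $\Phi(K)\subseteq\RR^n$, it suffices for the remaining $\W$-Asymmetric layers to approximate the continuous function $g\circ\Phi^{-1}$ on $\Phi(K)$. In these deeper layers the hardwired entries are no longer fatal, for two reasons working together: (i) each hardwired entry now multiplies the activation of a neuron whose \emph{incoming} weights we control, so by dedicating a block of auxiliary neurons to a simulated unit and solving a linear system we can absorb the spurious hardwired contributions, genericity of $\Fmat$ making the system consistent with probability $1$ (each failure event is the zero locus of a polynomial in the entries of $\Fmat$ that is not identically zero, witnessed by an explicit block-structured choice); and (ii) the extra depth lets a chain of direction-constrained layers realize the full computation even though no single one could. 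Assembling layer $1$ (the injective encoding), the compensating deeper layers, and a final read-out then reproduces $\hat g$ up to $\epsilon/2$, hence approximates $g$.

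The step I expect to be the real obstacle is turning this outline into the quantitative regime $\nfix\in o(n^{1/4})$, and in particular the ``deeper layers compensate'' claim. One must bound the width overhead of the auxiliary-neuron construction -- for instance a block of size of order $\nfix^2$ to neutralize the hardwired entries of one simulated unit, and of order $\nfix^2$ such blocks chained through the reconstruction, so roughly $\nfix^4$ neurons, which must fit inside width $n$ -- and simultaneously control the approximation errors incurred wherever the construction runs $\sigma$ near its linear regime, which do not shrink under rescaling of the trainable weights because the hardwired part of each row has magnitude of order $\kappa\sqrt{\nfix}$; these two budgets are compatible exactly in the stated regime. Checking that the union of all the finitely many non-degeneracy events -- full-rankness, injectivity of $\Phi$, and solvability of every auxiliary system -- still occurs with probability $1$ over $\Fmat$ completes the argument; everything else is bookkeeping around standard universal-approximation facts.
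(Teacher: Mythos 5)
Your overall plan coincides with the paper's at the top level: both start from the classical one-hidden-layer universal approximation theorem and then argue that a $\W$-Asymmetric MLP can reproduce the resulting standard network. The divergence is in how the hardwired entries are neutralized, and this is exactly where your proposal has a genuine gap.

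The paper's construction hinges on a specific property of the nonlinearity that your sketch never invokes: it assumes $\eta(x)-\eta(-x)=x$ (true for $\relu$, $\gelu$, $\swish$). This identity lets any linear map $W$ be represented \emph{exactly} by a two-layer block $A'\circ\eta\circ B'$, because pairs of rows $(W_i,-W_i)$ recombine to the identity after $\eta$. To cope with hardwired entries, the paper makes 3 (or, when both $A'$ and $B'$ are masked, 24) redundant copies of each row of $W$, arranged so that whenever a few entries of a copy are pinned by the mask, a free sibling copy absorbs the discrepancy. Crucially, the width overhead is a \emph{constant} factor ($24\times$), not polynomial in $\nfix$; the $\nfix^4$ only appears in the probability of a bad mask configuration (``some block of 24 rows has more than one intersecting pair''), via a union bound giving failure probability $O(\nfix^4/n)$, hence the $\nfix\in o(n^{1/4})$ regime. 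Your ``block of $\nfix^2$ auxiliary neurons per simulated unit, $\nfix^2$ blocks chained, so $\nfix^4$ neurons'' estimate points to a completely different (and much lossier) bookkeeping, and your ``solving a linear system, generically consistent'' step is not substantiated: you are claiming to cancel contributions that pass \emph{through} a nonlinear activation, which is not a linear system unless you have something like the $\eta(x)-\eta(-x)=x$ identity to linearize. You yourself flag the ``deeper layers compensate'' step as the unproven obstacle; that is precisely the missing idea.

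The first-layer treatment is also genuinely different. You argue (plausibly) that a single direction-constrained layer cannot hit an arbitrary ridge direction, and respond by instead making the first layer an injective homeomorphism $\Phi$ and then approximating $g\circ\Phi^{-1}$ downstream. The paper sidesteps the issue differently: it zero-pads the input to dimension $n$, lifts the target to a function on $\RR^n$, approximates it by a width-$n$ 2-layer MLP with a \emph{square} first weight matrix, and then exactly represents that square matrix with a 2-layer Asymmetric block using the $\eta$ trick. Your $\Phi$-based reduction merely transports the problem one layer inward (the downstream layers have the same masked structure), and without the exact-linear-map gadget you have no handle on why those deeper layers should be able to express $g\circ\Phi^{-1}$. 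Net assessment: the opening move (Cybenko plus simulation) matches the paper, but the core technical device --- the $\eta(x)-\eta(-x)=x$ redundancy construction that makes $\W$-Asymmetric layers exactly emulate arbitrary linear maps with only constant width overhead and failure probability $O(\nfix^4/n)$ --- is absent from your sketch, and the substitutes you propose (injective encoding, auxiliary-neuron linear systems) are not shown to work and would in any case not yield the stated $o(n^{1/4})$ scaling.
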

We have not been able to prove a similar result for $\sigma$-Asymmetric networks. Classical universal approximation results for standard neural networks do not apply to $\sigma$-Asym nets, as they tend to assume elementwise nonlinearities.

\begin{figure}
    \centering
    \includegraphics[width=.3\columnwidth]{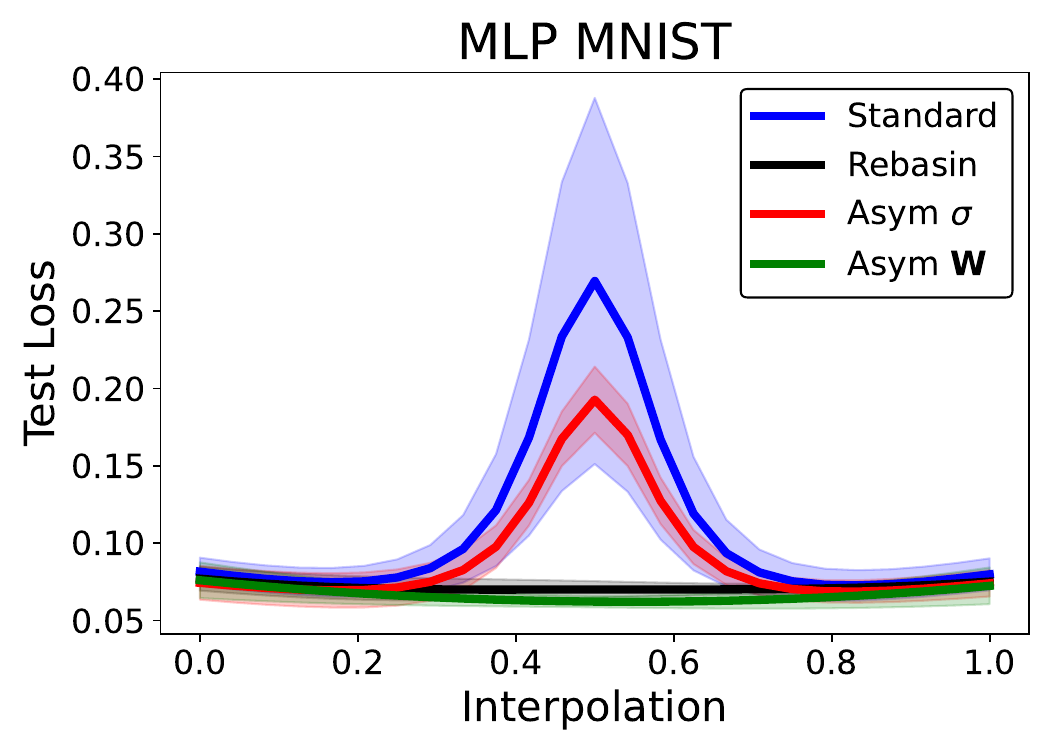}
    \includegraphics[width=.3\columnwidth]{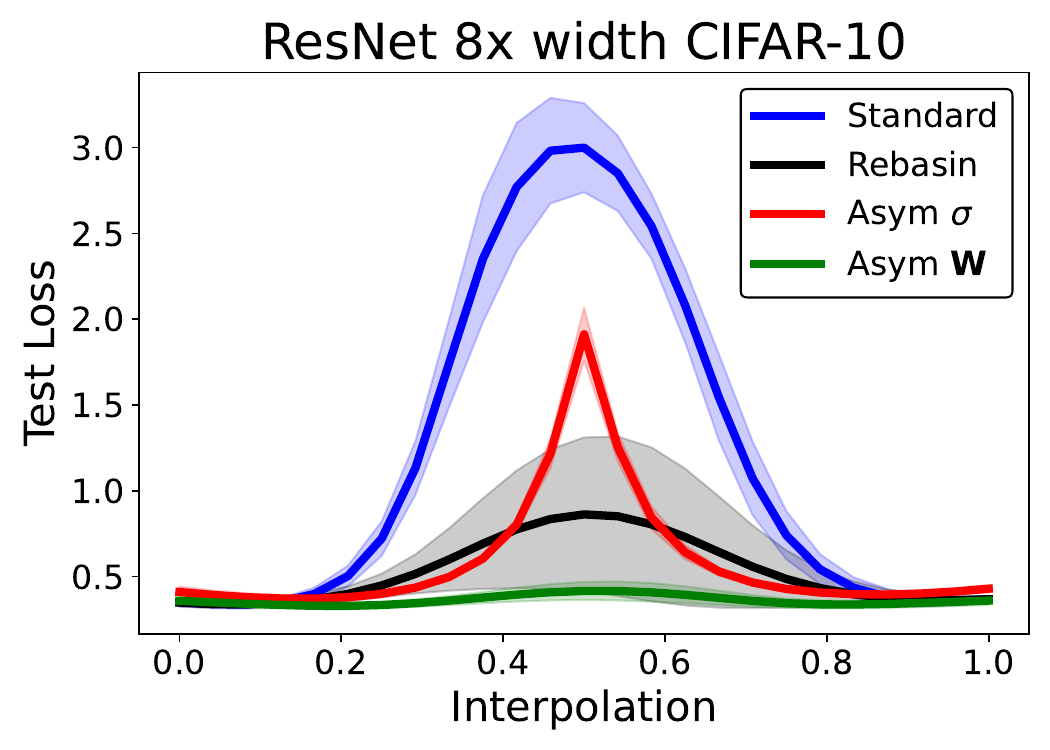}
    \includegraphics[width=.3\columnwidth]{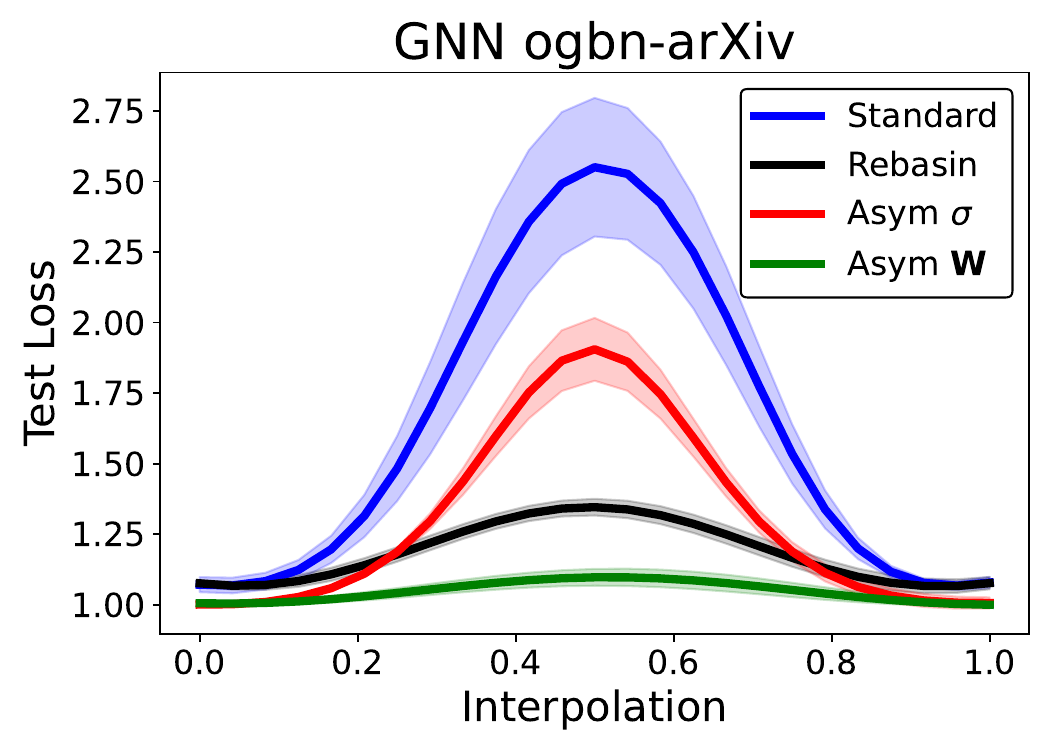}
    \caption{Linear mode connectivity: test loss curves along linear interpolations between trained networks. (Left) MLP on MNIST. (Middle) ResNet with $8\times$ width on CIFAR-10. (Right) GNN on ogbn-arXiv. $\W$-Asymmetric networks interpolate the best, followed by networks aligned with Git-Rebasin, then $\sigma$-Asymmetric networks, and finally standard networks.}
    \label{fig:lmc_loss}
    \vspace{-10pt}
\end{figure}

\section{Experiments}\label{sec:experiments}

\subsection{Linear Mode Connectivity without Permutation Alignment}\label{sec:lmc}

\textbf{Background.}
Under certain conditions, neural networks have been found to exhibit linear mode connectivity, which is when all networks on the line segment in parameter space between two well-performing trained networks are also well-performing. Starting with \citet{frankle2020linear}, who coined the term and provided the first in-depth analysis, many works have studied this phenomenon \citep{mirzadeh2021linear,entezari2022the, yunis2022convexity, ferbach2024proving}. When the two networks are randomly initialized and trained independently, linear mode connectivity generally does not hold~\citep{entezari2022the,ainsworth2023git}. However, if one of the two networks is permuted with a parameter symmetry that does not change its function, but that aligns its parameters with the other network, then linear mode connectivity empirically and theoretically holds for many more model / task combinations~\citep{entezari2022the, ainsworth2023git, zhao2023understanding, ferbach2024proving}. In fact, \citet{entezari2022the} conjectures that if all permutation symmetries are accounted for, then linear mode connectivity generally holds.
Since our Asymmetric networks remove parameter space symmetries, we may expect linear mode connectivity to hold, without any post-processing or alignment step.

\textbf{Hypothesis.} Asymmetric networks are more linearly mode connected than standard networks, and do not require post-processing or alignment of pairs of networks before merging.

\textbf{Experimental Setup.} We consider several networks and tasks: MLPs on MNIST, ResNets~\citep{he2016deep} on CIFAR-10, and Graph Neural Networks~\citep{xu2019how} on ogbn-arXiv~\citep{hu2020open}. For each architecture and task, we compute the midpoint test loss barrier: $L(\frac{1}{2}\theta_1 + \frac{1}{2}\theta_2) - \frac{1}{2}(L(\theta_1) + L(\theta_2))$. This measures how much worse the interpolated network with parameters $\frac{1}{2}\theta_1 + \frac{1}{2}\theta_2$ is than the original networks with parameters $\theta_1$ and $\theta_2$. We measure this barrier for standard networks, pairs of networks aligned with Git-Rebasin~\citep{ainsworth2023git}, and networks with our two approaches ($\sigma$-Asym and $\W$-Asym) applied. To be clear, whenever we interpolate between the weights of two Asymmetric networks, they have the same exact fixed weights $\Fmat$ (for both $\W$-Asym and $\sigma$-Asym) and the same exact mask $M$ (for $\W$-Asym).

\textbf{Results.} Figure~\ref{fig:lmc_loss} plots interpolation curves and Table~\ref{tab:lmc} displays midpoint test loss barriers of various methods. Our $\sigma$-Asymmetric approach lowers the test loss barrier compared to standard networks, but falls short of the alignment approach of Git-Rebasin. On the other hand, our $\W$-Asymmetric approach achieves strong (and sometimes perfect) interpolation, and interpolates better than standard networks aligned via Git-ReBasin.
This may be caused by failure of the Git-ReBasin approaches to find the optimal permutations, importance of other parameter symmetries besides layer-wise permutations, or other properties of $\W$-Asymmetric networks.

\begin{table}[ht]
    \centering
    \caption{Test loss interpolation barriers at midpoint: $L(\frac{1}{2}\theta_1 + \frac{1}{2}\theta_2) - \frac{1}{2}(L(\theta_1) + L(\theta_2))$ . We use different methods of breaking symmetries in each column; from left to right: no symmetry breaking, Git-Rebasin~\citep{ainsworth2023git}, our $\sigma$-Asym approach, and our $\W$-Asym approach. We report mean and standard deviation of the barrier across at least 5 pairs of networks, and bold lowest barriers.}
    {\small
    \begin{tabular}{lcrcr}
    \toprule
         & Standard & \multicolumn{1}{c}{Git-ReBasin} & $\sigma$-Asym (ours) & \multicolumn{1}{c}{$\W$-Asym (ours)} \\
         \midrule
         MLP (MNIST) &  $0.188 \pm .12$ & $-.006 \pm .00$ & $0.117 \pm .01$ & $\mathbf{-0.012} \pm .00$\\
         ResNet (CIFAR-10) & $3.287 \pm .32$ & $2.041 \pm .21$ & $2.521 \pm .46$ & $\mathbf{0.934} \pm .72$  \\
         ResNet 8x width (CIFAR-10) & $2.640 \pm .24$ &  $0.509 \pm .45$ & $1.492 \pm .15$ & $\mathbf{0.031} \pm .05$ \\
         GNN (ogbn-arXiv) & $1.475\pm .24$ & $0.269\pm.02$ & $0.901\pm.11$ & $\mathbf{0.095}\pm.03$ \\
         \bottomrule
    \end{tabular}
    }
    \vspace{-10pt}
    \label{tab:lmc}
\end{table}

\subsection{Bayesian Neural Networks}\label{sec:bnn}
\textbf{Background.} 
Bayesian deep learning is a promising approach to improve several deficits of mainstream deep learning methods, such as uncertainty quantification and integration of priors~\citep{jospin2022hands, papamarkou2024position}. However, parameter symmetries are problematic in Bayesian neural networks, as they are a major source of statistical nonidentifiability~\citep{jospin2022hands}. Parameter symmetries introduce modes in the posterior $p(\theta | \mc D)$ that make the posterior harder to approximate~\citep{aitchison2021deep, kurle2022detrimental, xiao2023compact}, sample from~\citep{papamarkou2022challenges, wiese2023towards}, and otherwise analyze~\citep{laurent2024a}. For instance, one common technique for training Bayesian neural networks is variational inference via fitting a Gaussian distribution to the true posterior $p(\theta | \mathcal D)$. This approach suffers because the Gaussian distribution has only one mode, whereas the true posterior has at least one mode for every parameter symmetry. As such, some approaches treat kernel matrices are random variables, which has less symmetries than treating features or weights as random variables, and allows better approximation by unimodal posteriors~\citep{yang2023theory,milsom2024convolutional}. Instead, we consider traditional, commonly-used Bayesian deep learning techniques applied on the features or weights of our Asymmetric networks.

\begin{table}[ht]
    \caption{Bayesian neural network results. Reported loss is the negative log likelihood loss. All results (except for last column) are after 50 epochs of training. $\W$-Asymmetric networks tend to improve over their standard counterparts, especially early in training. 16-layer MLPs fail to train, but 16-layer $\W$-Asymmetric MLPs successfully train. Standard or Asymmetric networks better than their counterpart by a standard deviation are bolded.
    }
    \centering
    {\scriptsize
    \begin{tabular}{crccccc} 
     \toprule
     & \multicolumn{1}{c}{Model} & Train Loss $\downarrow$ & Test Loss $\downarrow$ & ECE $\downarrow$ & Test Acc $\uparrow$ & Test Acc  (25 Epochs) $\uparrow$ \\
     \midrule
     \multirow{4}{*}{\rotatebox[origin=c]{90}{\scriptsize CIFAR-10}} & MLP-8 & $1.34 \pm .00$ & $1.24 \pm .01$ & $.039 \pm .009$ & $56.37 \pm .31$ & $52.87 \pm 0.2$\\
     & $\W$-Asym MLP-8 & $\mathbf{1.31} \pm .01$ & $\mathbf{1.22} \pm .01$ & $.042 \pm .009$ & $\mathbf{57.08} \pm .50$ & $\mathbf{54.15}\pm 0.2$\\
     & MLP-16 &  $2.29 \pm .02$ & $2.28 \pm .03$ & $.026 \pm .017$ & $13.54 \pm 2.0$ & $13.34 \pm 2.7$\\
      & $\W$-Asym MLP-16 &  $\mathbf{1.39} \pm .01$ & $\mathbf{1.27} \pm .01$ & $.045 \pm .009$ & $\mathbf{55.16} \pm .44$ & $\mathbf{51.42} \pm 0.3$\\
      \midrule
      \multirow{4}{*}{\rotatebox[origin=c]{90}{\scriptsize CIFAR-10}} &  ResNet20 & $\mathbf{.596} \pm .01$  & $.535 \pm .03$ & $.045 \pm .007$ & $81.98\pm 1.2$ & $72.37 \pm 1.0$ \\ 
     & $\W$-Asym ResNet20 & $.600 \pm .02$   & $.535 \pm .01$ & $.044 \pm .004$ & $81.94\pm 0.6$ & $73.64 \pm 1.5$\\
     &  ResNet110 & $.803 \pm .08$   & $.706 \pm .08$ & $.052 \pm .007$ & $75.71 \pm 2.8$ & $59.85 \pm 3.9$\\
     & $\W$-Asym ResNet110 & $\mathbf{.745} \pm .07$ & $\mathbf{.658} \pm .06$ & $.049 \pm .004$ & $77.40\pm 2.4$ & $\mathbf{63.20} \pm 3.0$\\
     \midrule
      \multirow{4}{*}{\rotatebox[origin=c]{90}{\scriptsize CIFAR-100}}&  ResNet20 (BN) & $1.68 \pm .03$  & $1.57\pm .02$ & $.078 \pm .004$ & $56.83 \pm .62$ & $46.80 \pm 0.9$  \\ 
     & $\W$-Asym ResNet20 (BN) & $\mathbf{1.62} \pm .02$  & $\mathbf{1.50}\pm .03$ & $.076 \pm .006$ & $\mathbf{58.40} \pm .62$ & $\mathbf{49.29} \pm 0.4$  \\
     &  ResNet20 (LN) & $1.97 \pm  .02$ & $1.88 \pm .02$ & $.090 \pm .007$& $50.02 \pm .54$ & $37.24 \pm 1.1$\\
     & $\W$-Asym ResNet20 (LN) & $\mathbf{1.91} \pm .03 $ & $\mathbf{1.82} \pm .02$ & $.086 \pm .006$& $\mathbf{51.20} \pm .47$ & $\mathbf{39.03} \pm 1.0$\\
     \bottomrule
\end{tabular}
\vspace{-5pt}
}
\label{tab:bnn_results}
\end{table}

\textbf{Hypothesis.} Using Asymmetric networks as the base model improves Bayesian neural networks, as the posterior will have less modes.

\textbf{Experimental setup.}
We train Standard Bayesian and Asymmetric Bayesian Networks for image classification using variational inference. We use the method of~\citep{tomczak2020elrg} for variational inference, which fits a Gaussian approximate posterior with a diagonal plus low-rank covariance.
We train 10 instances of each model and then report train loss, test loss, test accuracy, and Expected Calibration Error (ECE)~\citep{naeini2015obtaining}, which is a measure of calibration.

\textbf{Results.}
See training curves in Figure~\ref{fig:bayes_curves}, and quantiative results in Table~\ref{tab:bnn_results}. Using $\W$-Asymmetric networks as a base for Bayesian deep learning improves training speed and convergence. Most strikingly, Bayesian MLPs of depth 16 cannot train at all, while $\W$-Asymmetric Bayesian MLPs train well. In general, the $\W$-Asymmetric approach improves training and test accuracy across the several models (MLPs, ResNets of varying sizes, and ResNets with either batch norm or layer norm).

\begin{figure}[H]
    \centering
    \includegraphics[width=.32\columnwidth]{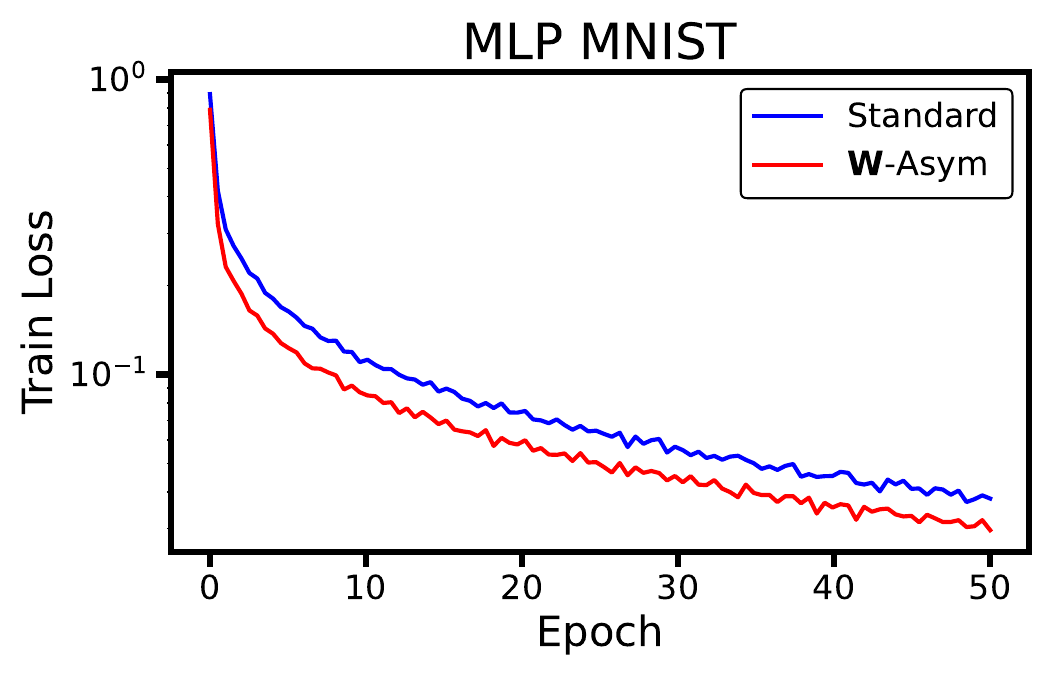}
    \includegraphics[width=.32\columnwidth]{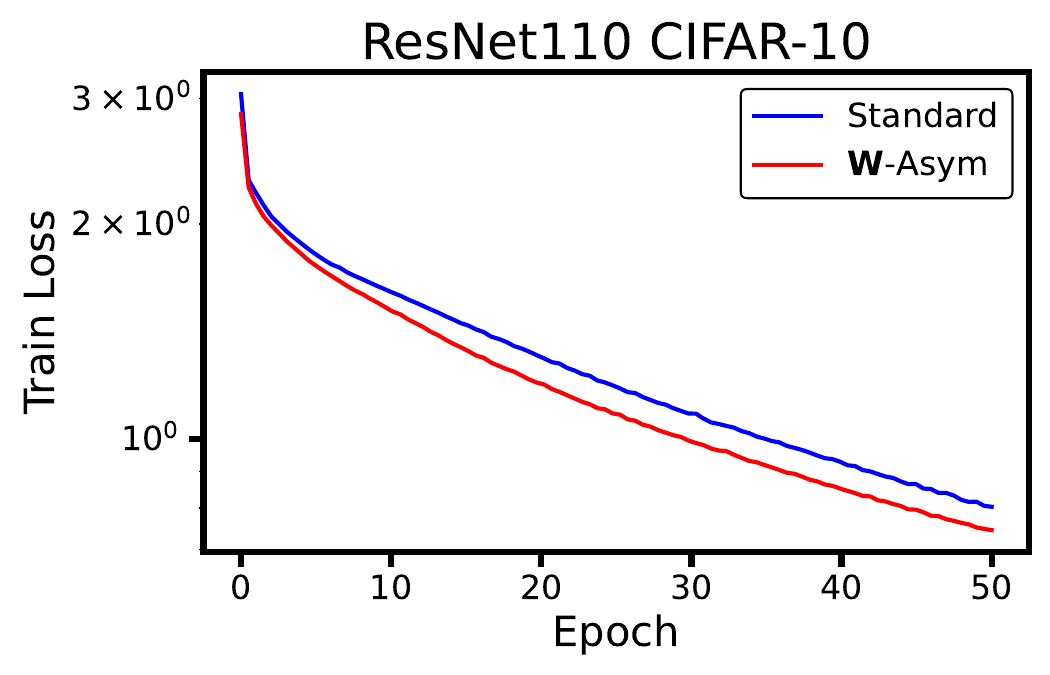}
    \includegraphics[width=.32\columnwidth]{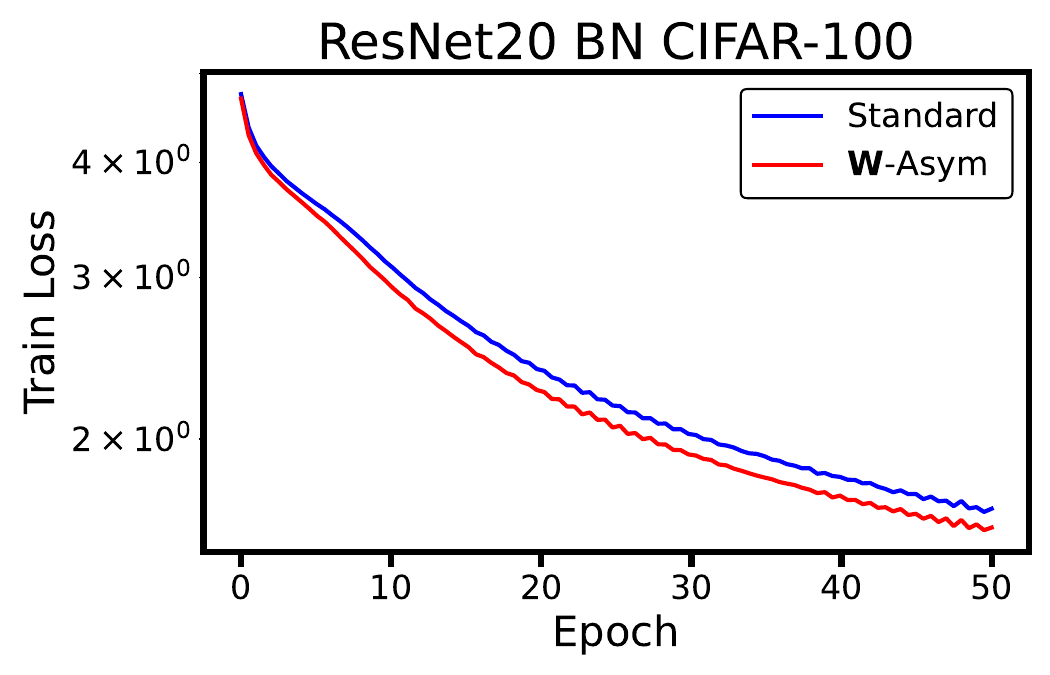}
    
    \caption{Bayesian neural network training loss over time for depth 8 MLPs on MNIST (left), ResNet110 on CIFAR-10 (middle), and ResNet20 with BatchNorm on CIFAR-100 (right). $\W$-Asymmetric networks train more quickly, and achieve lower training loss.}
    \label{fig:bayes_curves}
    \vspace{-5pt}
\end{figure}

\subsection{Metanetworks}\label{sec:metanet}

\begin{table}[ht]
    \centering
    \caption{Metanetwork performance for predicting the test accuracy of small ResNets and our $\W$-Asym ResNets. Each row is a different metanetwork. Reported are $R^2$ and Kendall $\tau$ on the test set --- higher is better.}
    {\small
    \begin{tabular}{lrcccc}
        \toprule
         & \multicolumn{2}{c}{ResNet} & &  \multicolumn{2}{c}{$\W$-Asym ResNet}\\
         \cmidrule(lr){2-3}  \cmidrule(lr){5-6} 
          & \multicolumn{1}{c}{$R^2$} & $\tau$ && $R^2$ & $\tau$ \\
         \midrule
         MLP & $.330\pm.04$ & $.389\pm .03$  &&  $\mathbf{.594}\pm.12$ &  $\mathbf{.864}\pm.01$ \\
         DMC~\citep{eilertsen2020classifying} & $.950\pm.01$ & $.787\pm.02$ && $\mathbf{.967}\pm.01$ & $\mathbf{.911}\pm.01$  \\
         DeepSets~\citep{zaheer2017deep} & $.855\pm.01$ & $.617\pm.03$ && $\mathbf{.936}\pm.00$ & $\mathbf{.858}\pm.00$  \\
         StatNN~\citep{unterthiner2020predicting} & $.976\pm.00$ & $.866\pm.00$ && $\mathbf{.978}\pm.00$ & $\mathbf{.935}\pm.01$ \\
         \bottomrule
    \end{tabular}
    }
    \label{tab:metanet_results}
\end{table}

\textbf{Background.} Metanetworks~\citep{lim2023graph} --- also referred to as deep weight-space networks~\citep{navon2023equivariant, shamsian2024improved}, meta-models~\citep{langosco2024towards}, or neural functionals~\citep{zhou2023permutation, zhou2023neural, zhou2024universal} --- are neural networks that take as inputs the parameters of other neural networks. Recent work has found that making metanetworks invariant or equivariant to parameter-space symmetries of the input neural networks can substantially improve metanetwork performance~\citep{navon2023equivariant,zhou2023permutation,lim2023graph, kofinas2024graph}.

\textbf{Hypothesis.} Asymmetric networks are easier to train metanetworks on because they do not have to explicitly account for symmetries.

\textbf{Experimental setup.} We experiment with metanetworks on the task of predicting the CIFAR-10 test accuracy of an input image classifier, which many metanetworks have been tested on~\citep{unterthiner2020predicting,eilertsen2020classifying,zhou2023permutation,lim2023graph}. We use metanetworks based on simple MLPs, 1D-CNN metanetworks~\citep{eilertsen2020classifying}, and metanetworks that are exactly invariant to permutation parameter symmetries: DeepSets~\citep{zaheer2017deep} and StatNN~\citep{unterthiner2020predicting}. We train two separate datasets of 10,000 image classifiers: one dataset of small ResNet models, and one dataset of $\W$-Asymmetric ResNet models. More information on the data, metanetworks, and training details are in Appendix~\ref{appendix:metanet_details}.

\textbf{Results.} In Table~\ref{tab:metanet_results}, we see that metanetworks are signficantly better at predicting the performance of our $\W$-Asymmetric ResNets than standard ResNets. Interestingly, simple MLP metanetworks, which view the input parameters as a flattened vector, can predict the test accuracy of Asymmetric Networks quite well, but fail on standard networks. Also, the permutation equivariant metanetworks (DeepSets and StatNN) both improve on $\W$-Asym ResNets compared to on ResNets, even though the permutation symmetries of standard ResNets do not affect these metanetworks; thus, it may be possible that other symmetries in standard ResNets (but not Asym-ResNets) harm metanetwork performance, or they may be other factors besides symmetries that improve metanetwork performance for Asym-ResNets.

\subsection{Monotonic Linear Interpolation}\label{sec:mli}

\begin{figure}
    \centering
    \includegraphics[width=.32\columnwidth]{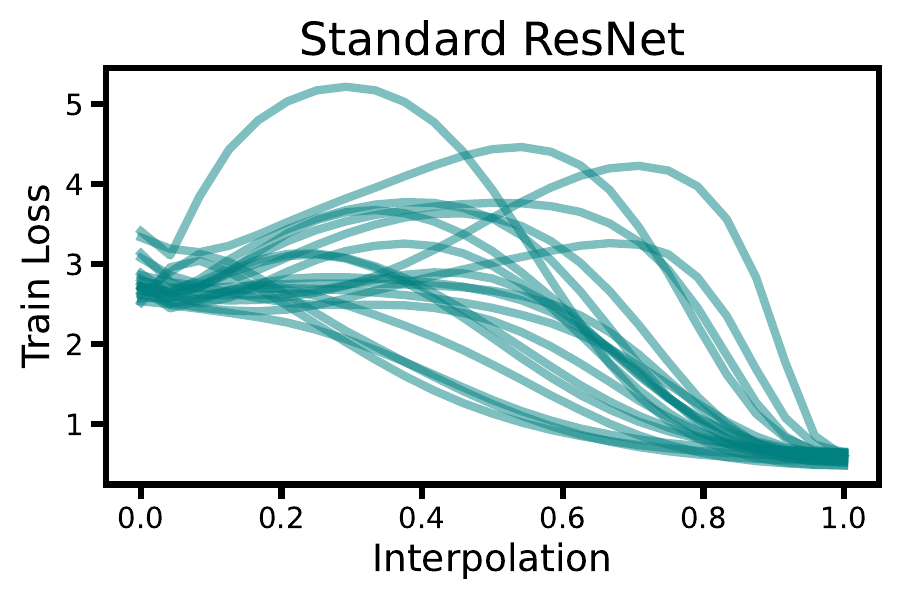} 
    \includegraphics[width=.32\columnwidth]{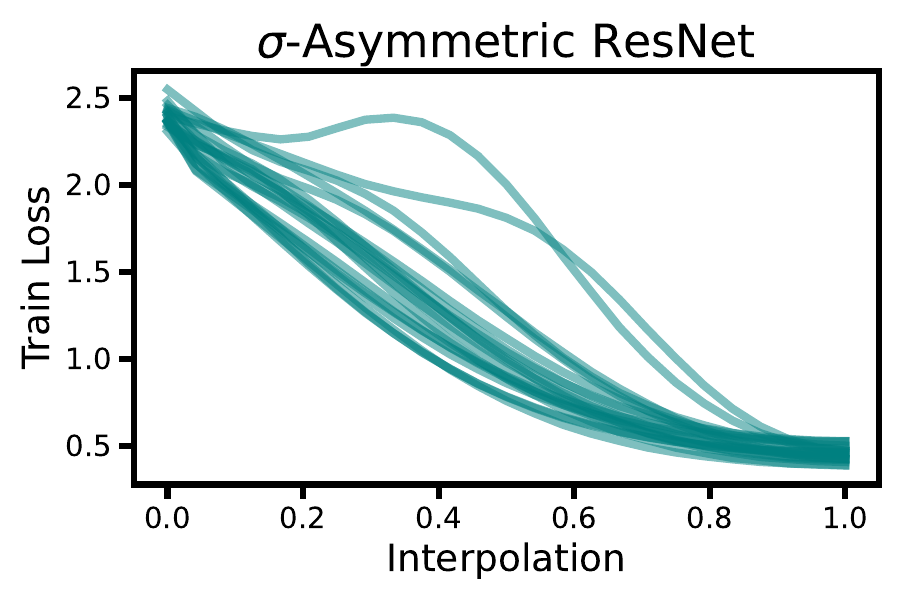}
    \includegraphics[width=.32\columnwidth]{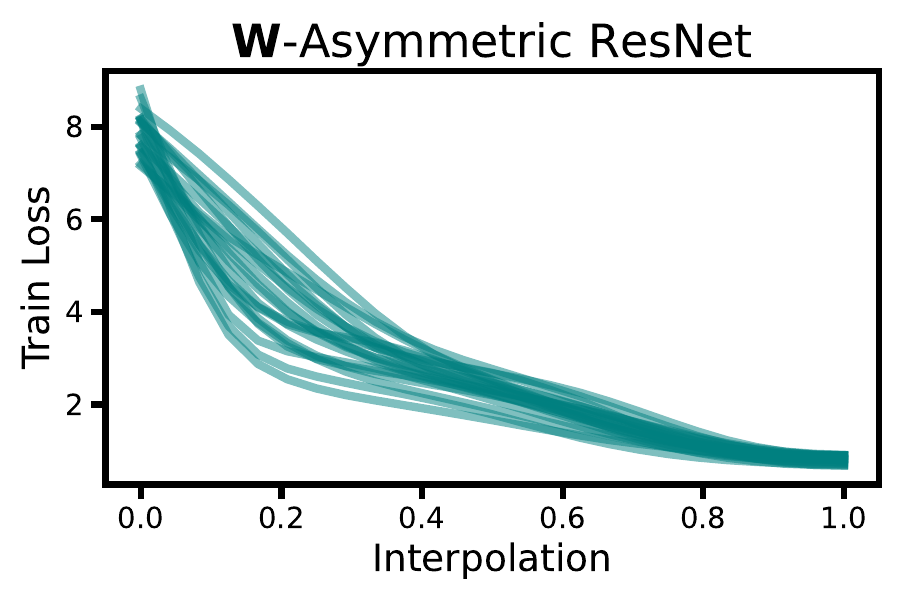}
    \caption{Train loss against interpolation coefficient $\alpha$ for the interpolation $(1-\alpha) \theta_0 + \alpha \theta_T$ between initial parameters $\theta_0$ and trained parameters $\theta_T$. Trajectories for the 20 $(\theta_0, \theta_T)$ pairs of lowest train loss for each architecture are plotted. The trajectories for Asymmetric ResNets appear significantly more monotonic and convex. %
    }
    \label{fig:mli_lines}
    \vspace{-10pt}
\end{figure}

\begin{table}[ht]
    \centering
    \caption{Monotonic linear interpolation: properties of linear interpolations between 300 pairs of initialization and trained parameters. Arrows denote behavior that is more similar to convex optimization, e.g. there is a downarrow $(\downarrow)$ next to $\Delta$ because convex objectives have nonpositive $\Delta$, while nonconvex can have positive $\Delta$. For both types of Asymmetric networks, all differences from Standard ResNets are statistically significant ($p < .001$) under a two-sided T-test: Asymmetric networks have significantly more monotonic and convex linear interpolations from initialization.}
    {\small
    \begin{tabular}{rrccc}
    \toprule
         & \multicolumn{1}{c}{$\Delta \downarrow$} & Percent Monotonic $\uparrow$ & Local Convexity $\uparrow$ & Global Convexity $\uparrow$ \\
         \midrule
         Standard ResNet & $.079 \pm .109$ & $26.3\%$ & $.548 \pm .139 $ & $.823 \pm .229$ \\
         $\sigma$-Asym ResNet & $.004\pm.047$ & $87.3\%$ & $.675\pm.143$ & $.976\pm .098$ \\
         $\W$-Asym ResNet & $\mathbf{-.027} \pm .026$ & $\mathbf{100\%}$ & $\mathbf{.769} \pm .165$ & $\mathbf{1.00} \pm .000$ \\
         \bottomrule
    \end{tabular}
    }
    \label{tab:mli_results}
    \vspace{-5pt}
\end{table}

\textbf{Background.} One common method of studying the loss landscapes of neural networks is by studying the one-dimensional line segment of parameters attained by linear interpolation between parameters at initialization and parameters after training. Many works have observed \textit{monotonic linear interpolation} (MLI), which is when the training loss monotonically decreases along this line segment~\citep{goodfellow2014qualitatively, frankle2020revisiting, lucas2021analyzing, vlaar2022what}. Loss landscapes of convex problems have this property as well, so presence of the monotonic linear interpolation property has been used as a rough measure of how well-behaved the loss landscape is. However, with many types of models, tasks, or hyperparameter settings, monotonic linear interpolation does not hold~\citep{lucas2021analyzing, vlaar2022what}, or there is a large plateau where the loss barely changes for much of the line segment~\cite{frankle2020revisiting, wang2023plateau}; neither of these properties can happen for convex objectives trained to completion. To the best of our knowledge, there has been little work on the role of parameter symmetries --- or lack thereof --- in monotonic linear interpolation %
(besides one minor experiment in \citep{lucas2021analyzing} Appendix C.9). Nonetheless, since removing parameter symmetries substantially improves linear interpolation between trained networks (Section~\ref{sec:lmc}), one may expect removing parameter symmetries to improve monotonic linear interpolation.

\textbf{Hypothesis.} The training loss along the line segment between initialization and trained parameters is more monotonic and convex for Asymmetric networks.

\textbf{Experimental setup.} For the learning task, we follow the setup used for creating the dataset of image classifiers in Section~\ref{sec:metanet}. In particular, we train 300 standard ResNets and 300 $\W$-Asymmetric ResNets with varying hyperparameters sampled from the same distributions as used for the dataset of image classifiers (see Appendix Table~\ref{tab:hparams_cifar10}). For each of these networks, we linearly interpolate between its initial parameters $\theta_0$ and its final trained parameters $\theta_T$: $(1-\alpha) \theta_0 + \alpha \theta_T$ for 25 uniformly spaced values $0 = \alpha_1 < \alpha_2 < \ldots < \alpha_{25} = 1$. To measure monotonicity, we record the maximum increase between adjacent networks $\Delta = \max(L(\alpha_{i+1}) - L(\alpha_i))$, and the percentage of networks that have $\Delta \leq 0$ i.e. the percentage of networks that satisfy monotonic linear interpolation. To measure convexity, we consider a local convexity measure (the proportion of $\alpha_i$ where the centered difference second derivative approximation is nonnegative) and a global convexity measure (the proportion of $\alpha_i$ such that $L(\alpha_i)$ lies below the line segment between the endpoints, i.e. $L(\alpha_i) \leq (1-\alpha_i) L(0) + \alpha_i L(1)$).

\textbf{Results.}  Table~\ref{tab:mli_results} shows the measures of monotonicity and convexity for standard, $\sigma$-Asymmetric, and $\W$-Asymmetric ResNets. Remarkably, every single one of the 300 $\W$-Asymmetric ResNets satisfies monotonic linear interpolation and has a trajectory that lies underneath the line segment between the endpoints. Qualitatively, we can see in Figure~\ref{fig:mli_lines} that $\W$-Asymmetric ResNets do not have any clear loss barriers from initialization, nor any loss plateaus that indicate nonconvexity. In contrast, the majority of standard ResNets have non-monotonic trajectories, and the monotonic trajectories seem to be more nonconvex. $\sigma$-Asymmetric network trajectories are signficantly more convex and monotonic than standard network trajectories, but there are some non-monotonic or nonconvex trajectories still.

\subsection{Other Optimization and Loss Landscape Properties}\label{sec:other_properties}

In Appendix~\ref{appendix:additional_observations}, we note other interesting differences in optimization and loss landscape properties of Asymmetric and standard neural networks. These can be summarized as:
\vspace{-5pt}
\begin{enumerate}[noitemsep]
    \item Even though Asymmetric networks interpolate much better than standard networks, the parameters of trained Asymmetric networks are often basically the same distance away from each other in weight space as standard networks.
    \item Asymmetric networks do not tend to overfit as much: the difference in train performance and test performance can be substantially lower than that of standard networks.
    \item Asymmetric networks can take longer to train, especially when choosing hyperparameters that make them more dissimilar to standard networks.
\end{enumerate}

\section{Discussion}

While many properties of Asymmetric networks are in line with our hypotheses and intuition about the impact of removing parameter symmetries, there are many unexpected effects and unanswered questions that are promising to further investigate.
For instance, we did not extensively explore Asymmetric networks in the context of model interpretability, generalization measures in weight spaces, or optimization improvements, all of which are known to be influenced to some extent by parameter symmetries.
Further studying the properties in Section~\ref{sec:other_properties}, the dependence of behavior on the choices of Asymmetry-inducing hyperparameters, and other design choices in making networks asymmetric could also bring more insights into parameter space symmetries.

Also, it is interesting that our $\sigma$-Asymmetric networks do not appear to break parameter symmetries as well as our $\W$-Asymmetric networks. We have run preliminary empirical tests on several variants of $\sigma$-Asym networks, such as: $\sigma \circ \Fmat \circ \sigma$ as the nonlinearity, sparsifying $\Fmat$, adding instead of multiplying the gate, using cosine instead of sigmoid, squaring instead of using sigmoid, and putting a LayerNorm in the nonlinearity. However, none of these approaches worked well. We believe that such failures may be because $\sigma$-Asymmetry breaks symmetries at the activation / neuron level, whereas $\W$-Asymmetry breaks symmetries in the larger space of weights (for more evidence, see Appendix~\ref{appendix:ablations}, where we show that fixing biases at neurons also fails to effectively remove symmetries). These curiosities provide interesting directions for future work.

All in all, we believe that future theoretical and empirical study of Asymmetric networks could garner many insights into the role of parameter symmetries in deep learning.

\subsection*{Acknowledgements}
We would like to thank Kwangjun Ahn, Benjamin Banks, Nima Dehmamy, Nikos Karalias, Jinwoo Kim, Marc Law, Hannah Lawrence, Thien Le, Jonathan Lorraine,  James Lucas, Behrooz Tahmasebi, and Logan Weber for discussions at various points of this project. DL is supported by an NSF Graduate Fellowship. RW is supported in part by NSF award 2134178. HM is the Robert J. Shillman Fellow, and is supported by the Israel Science Foundation through a personal grant (ISF 264/23) and an equipment grant (ISF 532/23). This research was supported in part by
Office of Naval Research grant N00014-20-1-2023 (MURI ML-SCOPE), NSF AI Institute TILOS (NSF CCF-2112665), NSF award 2134108, and the Alexander von Humboldt Foundation.

\bibliography{refs}
\bibliographystyle{plainnat}

\newpage

\appendix

\section{Additional Observations on Asymmetric Networks}\label{appendix:additional_observations}

\begin{figure}
    \centering
    \includegraphics[width=.5\columnwidth]{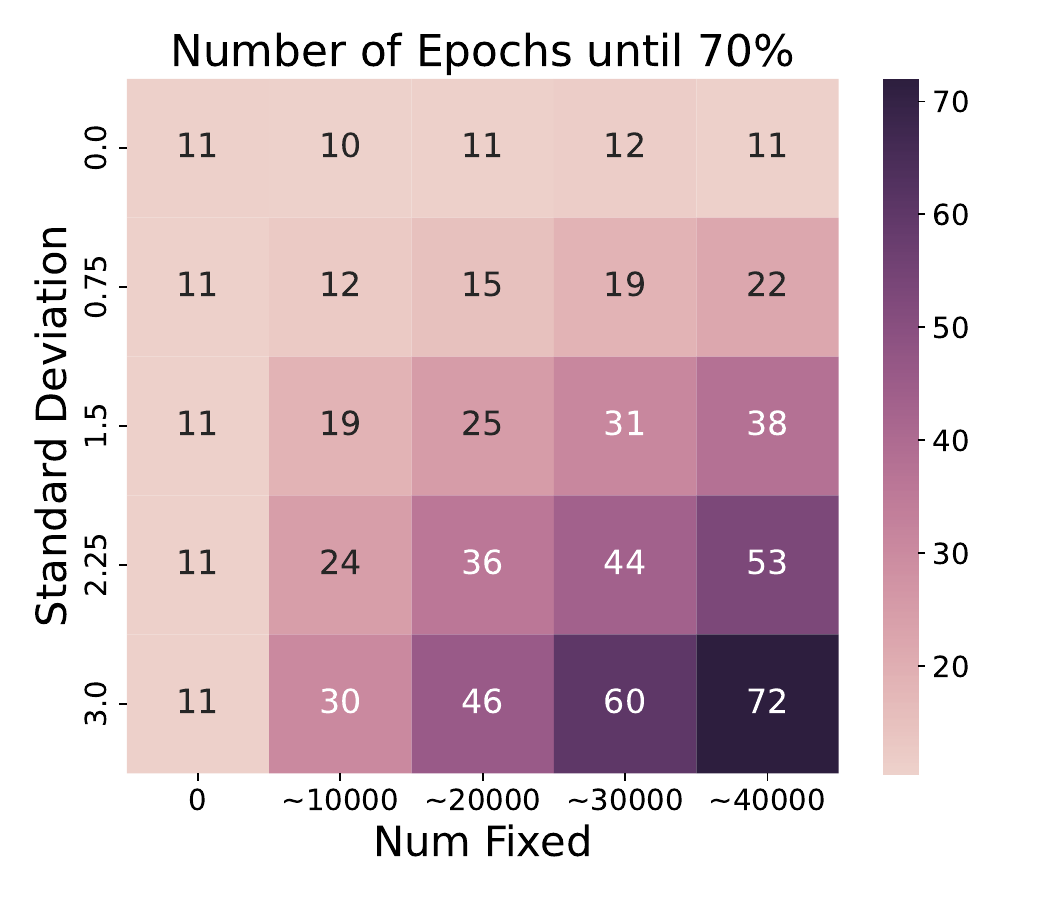}
    \caption{Epochs until reaching $70\%$ training accuracy on CIFAR-10 when varying the hyperparameters of $\W$-Asymmetric ResNets; we vary number of fixed entries $\nfix$ and standard deviation $\kappa$ of the fixed entries $\Fmat$. Entries further to the bottom and right are more asymmetric, while the entries further to the top and left are more like standard networks (the leftmost column are all standard networks). We see that more-asymmetric networks need more time to train.}
    \label{fig:convergence_heatmap}
\end{figure}

There are several other interesting differences in the optimization and loss landscape properties of Asymmetric and standard neural networks. For one, even though Asymmetric networks generally interpolate significantly better than standard networks, this cannot be seen by measuring distances in parameter space. For instance, in GNN experiments following the setup of Section~\ref{sec:lmc}, pairs of standard GNNs have a distance per parameter of .000174 on average, whereas $\W$-Asymmetric GNNs have .000159, which is only slightly lower. However, the average test loss barrier is 1.448 for standard GNNs while it is only 0.069 for $\W$-Asymmetric GNNs. Likewise, in our datasets of 10,000 standard and $\W$-Asymmetric ResNets, the average distance per parameter between the weights of trained standard classifiers is .0034, which is actually lower than the distance per parameter of .0051 for $\W$-Asymmetric ResNets (estimated on 20,000 pairs of networks). Thus, although we sometimes imagine well-interpolating pairs of networks to lie in the same local basin of parameter space, $\W$-Asymmetric networks are actually rather far apart in parameter space, but nonetheless have linear segments of low loss between them.

We also find that Asymmetric networks often do not overfit as much as standard networks. For instance, in the GNN setup of Section~\ref{sec:lmc}, standard GNNs have a max training accuracy of $84.6\%$ on average, with a validation accuracy of $71.6\%$. On the other hand, $\sigma$-Asym GNNs have $70.8\%/70.1\%$ train/validation accuracy, while $\W$-Asym GNNs have $70.7\%/70.06\%$ train/validation accuracy. This difference does not show as much in our datasets of 10,000 standard ResNets and $\W$-Asym ResNets, possibly because of the substantial regularization (data augmentation, weight decay, and label smoothing) used for training (standard gets $74.8\%/73.8\%$ train/test accuracy while $\W$-Asymmetric gets $64.0\%/64.0\%$).

Further, in Figure~\ref{fig:convergence_heatmap}, we see that training speed is slower for $\W$-Asymmetric ResNets when we increase the amount of asymmetry (by increasing the number of fixed entries and the standard deviation of the fixed entries). While standard ResNets take on average $11$ epochs to reach $70\%$ training accuracy on CIFAR-10, $\W$-Asymmetric ResNets with the most extreme hyperparameters take up to $72$ epochs.

\section{Proofs of Theoretical Results}\label{appendix:proofs}

\subsection{Graph-based approach}\label{appendix:graph_proofs}

Here, we prove that as long as each mask matrix $M$ in our $\W$-Asymmetric MLPs with fixed entries set to zero has unique nonzero rows, then our architecture has no nontrivial neural DAG automorphisms. In practice, we find that setting the standard deviation $\kappa$ of the fixed entries $\Fmat$ to be positive (and in fact orders of magnitude larger than the standard deviation that we typically initialize trainable weights with) is important to achieve properties such as linear mode connectivity that Asymmetric networks have but standard networks do not have. When $\kappa = 0$ (i.e. when fixed entries are set to zero), we can directly work in the framework of \citet{lim2023graph} that connects parameter symmetries to computation graph automorphisms. To work towards generalizing our result to $\kappa > 0$, we would have to modify the definitions and results of \citet{lim2023graph}; for instance, we would need to add edges associated to untrainable parameters in the computation graph, and redefine the concept of neural DAG automorphisms. We leave such exploration to future work.
\begin{theorem}
    If each mask matrix $M$ has unique nonzero rows, then $\W$-Asymmetric MLPs with $\kappa=0$ have no nontrivial neural DAG automorphisms.
\end{theorem}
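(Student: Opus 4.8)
The plan is to work directly in the computation-graph framework of \citet{lim2023graph}. When $\kappa = 0$, every hardwired entry of each layer's weight matrix is exactly zero, so the computation graph $G$ of the $\W$-Asymmetric MLP is simply the layered DAG whose edge set between consecutive layers $\ell-1$ and $\ell$ is $E_\ell = \{(i \to j) : M^{(\ell)}_{ji} = 1\}$; equivalently, the set of in-neighbors of neuron $j$ in layer $\ell$ is exactly the support of the $j$-th row of $M^{(\ell)}$. I would first spell out this correspondence and note that, since MLPs have no weight sharing, a neural DAG automorphism of $G$ is just a graph automorphism preserving node types, where for MLPs the type of a node encodes its layer index and, for the input and output layers, the individual coordinate. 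Hence any neural DAG automorphism $\phi$ is a tuple of per-layer permutations $(\pi_0, \pi_1, \dots, \pi_L)$ with $\pi_0 = \mathrm{id}$ and $\pi_L = \mathrm{id}$, and the edge-preservation property becomes $M^{(\ell)}_{ji} = M^{(\ell)}_{\pi_\ell(j),\, \pi_{\ell-1}(i)}$ for all $i,j$ and all layers $\ell$. (If one prefers not to assume the type encodes the layer, I would instead observe that in a layered DAG with no skip connections every directed path from an input neuron to a layer-$\ell$ neuron has length exactly $\ell$, so fixing the input neurons and preserving edges forces $\phi$ to preserve layers; here the assumption that every row of every $M^{(\ell)}$ is nonzero guarantees every hidden neuron is reachable from the input layer, so no neuron is left untyped by this argument.)

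Next I would run an induction on the layer index $\ell$ to show $\pi_\ell = \mathrm{id}$ for all $\ell$. The base case $\ell = 0$ is the definitional fact $\pi_0 = \mathrm{id}$. For the inductive step, assume $\pi_{\ell-1} = \mathrm{id}$ and fix a neuron $j$ in layer $\ell$. The edge-preservation identity with $\pi_{\ell-1} = \mathrm{id}$ gives $M^{(\ell)}_{ji} = M^{(\ell)}_{\pi_\ell(j),\, i}$ for every $i$, i.e. the $j$-th and $\pi_\ell(j)$-th rows of $M^{(\ell)}$ coincide as binary vectors. Since the rows of $M^{(\ell)}$ are pairwise distinct by hypothesis, this forces $\pi_\ell(j) = j$; as $j$ was arbitrary, $\pi_\ell = \mathrm{id}$, completing the induction. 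Therefore $\phi = (\mathrm{id}, \dots, \mathrm{id})$ is the trivial automorphism, which is the claim.

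I expect the only real subtlety — the \emph{main obstacle} — to be pinning down, carefully and in the notation of \citet{lim2023graph}, that a neural DAG automorphism of the masked graph really does (i) fix the input neurons pointwise and (ii) preserve layers, in the $\kappa = 0$ regime; once those two structural facts are in hand, the argument is the short bipartite-graph/row-distinctness induction above. The remaining bookkeeping is to handle degenerate rows consistently: a zero row would correspond to a hidden neuron with no incoming edges, so requiring all rows nonzero (as in ``unique nonzero rows'') both makes the graph well-formed and is harmless, since only distinctness of the rows is used. I would also add a one-line remark that the $\kappa > 0$ case genuinely falls outside this framework, because the hardwired nonzero entries would need to appear as additional, value-constrained edges in the computation graph — exactly the extension the paper defers to future work.
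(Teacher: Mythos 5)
Your proposal is correct and follows essentially the same route as the paper's own proof: set up the block-bipartite computation graph with adjacency blocks given by the masks, argue layer preservation (your path-length observation is equivalent to the paper's Lemma~\ref{lem:layer_preserve}, which also uses the nonzero-row hypothesis to guarantee reachability), and then induct layer-by-layer from the fixed input neurons using row-distinctness to force each per-layer permutation to be the identity. The only cosmetic difference is notation --- per-layer permutations $\pi_\ell$ versus a block-diagonal permutation matrix $P$ --- but the decomposition, the key lemma, and the induction are the same.
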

\begin{proof}
    Consider an $L$-layer $\W$-Asymmetric MLP with fixed entries set to zero. Denote its weights as $\W_L, \ldots, \W_1$ and the corresponding binary masks as $M_L, \ldots, M_1$. The forward pass of such a network on an input $x$ is then
    \begin{equation}
        [\W_L \odot M_L] \sigma( \cdots \sigma([\W_1 \odot M_1] x) \cdots),
    \end{equation}
    for some elementwise nonlinearity $\sigma$.
    The dimension of $\W_i$ is $d_i \times d_{i-1}$. In the framework of \citet{lim2023graph}, this is a feedforward neural network with a computation graph defined as follows.

    The node set is $V_0 \times V_1 \times \ldots \times V_L$, where $V_i$ has $d_i$ nodes, and no nodes are shared between different $V_i$. If a node $v$ is in $V_i$, then we say that $\mathrm{layer}(v) = i$. $V_0$ contains the input nodes and $V_L$ contains the output nodes. The adjacency matrix can be written as
    \begin{equation}
        A = \begin{bmatrix}
            0 &  \\
            M_1 & 0 \\
            & M_2 & \\
            &    & \ddots & 0\\
            & & & M_L & 0
        \end{bmatrix}.
    \end{equation}
    Every block besides the ones containing masks is zero. There are $L+1 \times L+1$ blocks, and the $(i,j)$ block is of size $d_i \times d_j$.

    Recall that a neural DAG automorphism is a relabelling of nodes $\tau: V \to V$ such that $\tau$ is bijective, $(i,j) \in E$ if and only if $(\tau(i), \tau(j)) \in E$, and every input node and output node is a fixed point of $\tau$.

    Now, let $\tau: V \to V$ be a neural DAG automorphism. Further, let $P$ be the corresponding permutation matrix. We will show that $\tau$ is the identity, i.e. that $P = I$. By Lemma~\ref{lem:layer_preserve}, we know that $\tau$ preserves layer number of nodes, meaning $\mathrm{layer}(\tau(i)) = \mathrm{layer}(i)$. Thus, $P$ is a block diagonal permutation matrix:
    \begin{equation}
        P = \begin{bmatrix}
            P_0\\
            & P_1\\
            & & \ddots\\
            & & & P_L\\
        \end{bmatrix},
    \end{equation}
    where $P_i$ is $d_i \times d_i$. Morever, $P_0 = I$ and $P_L = I$ because input nodes and output nodes are fixed points.
    Applying this to the adjacency matrix, we see that
    \begin{equation}
        \tau(A) = PAP^\top = \begin{bmatrix}
            0 & \\
            P_1 M_1 P_0^\top & 0\\
             & & & \ddots\\
             & & & P_L M_L P_{L-1}^\top & 0
        \end{bmatrix}.
    \end{equation}
    Since $\tau$ is a neural DAG automorphism, we have that $\tau(A) = A$. Equating blocks, this means that
    $P_1 M_1 P_0^\top = M_1$. As $P_0 = I$, we have $P_1 M_1 = M_1$. But $M_1$ has unique rows, so $P_1 = I$ as well.

    For the inductive step, assume $P_i = I$ for some $i$. Then $P_{i+1} M_{i+1} P_i^\top = P_{i+1} M_{i+1} = M_{i+1}$, so since $M_{i+1}$ has unique rows, we have that $P_{i+1} = I$. As this holds for any $i$ by induction, this means that $P = I$, so $\tau$ is a trivial neural DAG automorphism and we are done.
\end{proof}
\begin{lemma}\label{lem:layer_preserve}
    Neural DAG automorphisms preserve layer number in $\W$-Asymmetric MLPs that have masks with nonzero rows.
\end{lemma}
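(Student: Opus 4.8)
The plan is to exploit the strictly layered structure of the computation graph. The adjacency matrix $A$ is block-bidiagonal, so every edge runs from some $V_{i-1}$ to $V_i$; consequently any directed path from a node of $V_0$ to a node $v$ visits exactly one node of each of $V_0, V_1, \ldots, V_{\mathrm{layer}(v)}$ and therefore has length exactly $\mathrm{layer}(v)$. The point is to turn this observation into an \emph{intrinsic}, automorphism-invariant description of the layer function, after which invariance under any graph automorphism is automatic.

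First I would characterize the input nodes graph-theoretically: the first block-row of $A$ vanishes, so every node of $V_0$ has in-degree $0$, while every mask $M_i$ (for $i \ge 1$) has no zero row, so every node of $V_i$ with $i \ge 1$ has in-degree at least $1$. Hence the in-degree-$0$ nodes of $G$ are exactly those of $V_0$. (One could instead simply invoke that neural DAG automorphisms fix input nodes by definition; I keep the intrinsic version to emphasize that only set-wise preservation of $V_0$ is needed here.) Next I would prove $\mathrm{layer}(v) = \ell(v)$, where $\ell(v)$ denotes the length of the longest directed path from $V_0$ to $v$: existence of at least one such path follows by induction on $\mathrm{layer}(v)$ (a node in $V_i$ with $i \ge 1$ has an incoming edge from some $u \in V_{i-1}$ by the nonzero-row property, and we prepend a path to $u$ obtained from the inductive hypothesis), and by the layered structure every $V_0$-to-$v$ path has length exactly $\mathrm{layer}(v)$, so $\ell(v) = \mathrm{layer}(v)$.

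Finally I would conclude: a neural DAG automorphism $\tau$ (with permutation matrix $P$) is in particular a graph automorphism, so it preserves in-degrees (hence $\tau(V_0) = V_0$ set-wise) and carries directed paths bijectively to directed paths of the same length; thus $\ell$ is $\tau$-invariant, i.e. $\mathrm{layer}(\tau(v)) = \ell(\tau(v)) = \ell(v) = \mathrm{layer}(v)$, which is the claim. I do not expect a genuine obstacle; the only delicate point is the layered bookkeeping — recognizing $V_0$ intrinsically and checking that the strictly layered edge structure forces all $V_0$-to-$v$ paths to share the common length $\mathrm{layer}(v)$ — after which automorphism-invariance of longest-path length finishes the argument and feeds directly into the block-diagonal form of $P$ used in the main theorem.
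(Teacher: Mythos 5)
Your proof is correct, and it takes a modestly different route from the paper's. The paper argues by direct induction on the layer index: input nodes are fixed pointwise by the definition of a neural DAG automorphism; assuming $\tau$ preserves layer for all nodes of layer $l$, any node $j$ of layer $l+1$ has, by the nonzero-row condition on $M_{l+1}$, some predecessor $i$ of layer $l$, and the identity $A_{\tau(i),\tau(j)} = A_{i,j}$ (from $PAP^\top = A$) forces $\tau(j)$ to be an out-neighbor of $\tau(i)$ and hence to lie in layer $l+1$. You instead first establish the intrinsic identity $\mathrm{layer}(v) = \ell(v)$, where $\ell(v)$ is the longest directed-path length from $V_0$ to $v$: existence of such a path uses the same nonzero-row condition, and the fact that every $V_0$-to-$v$ path has length exactly $\mathrm{layer}(v)$ uses the block-bidiagonal adjacency. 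You then conclude by automorphism-invariance of $\ell$, after noting $\tau(V_0) = V_0$, which you derive from in-degrees rather than from the definitional fixing of input nodes, though either works. The two arguments rest on the same two facts (nonzero rows give reachability; layered edges advance the layer by exactly one per step), but you repackage them into a manifestly $\tau$-invariant quantity where the paper inducts directly on the action of $\tau$. Your version is arguably cleaner conceptually and slightly more general (it only needs $\tau$ to preserve $V_0$ set-wise); the paper's is more self-contained in that it never introduces the longest-path reformulation.
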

\begin{proof}
    Let $\tau$ be a neural DAG automorphism. This means that $PAP^\top = A$, where $P$ is the permutation matrix associated to $\tau$. Then, using $PAP^\top = A$ for the first equality and the definition of $P$ in the second, we have that
\begin{align}
    A_{\tau(i), \tau(j)}  = (PAP^\top)_{\tau(i), \tau(j)} = A_{i,j}.
\end{align}
We proceed by induction on layer number $l$. For any input node $i$ we know that $\tau(i) = i$, so of course $\mathrm{layer}(\tau(i)) = \mathrm{layer}(i)$.

Now, suppose that $\mathrm{layer}(\tau(i)) = \mathrm{layer}(i)$ for any $i$ in layer $l \geq 1$. If node $j$ is in layer $l+1$, then there is some $i$ in layer $l$ such that $(i, j) \in E$ because $M_{l+1}$ has no nonzero rows. We have that $A_{\tau(i), \tau(j)} = A_{i,j}$, so $\tau(i)$ is connected to $\tau(j)$. As we know that $\tau(i)$ is in layer $l$, we have that $\tau(j)$ is in layer $l+1$.
\end{proof}

\subsection{Symmetry Breaking via Nonlinearities}\label{appendix:nonlinearity_proofs}

\begin{proposition}
    Let the parameter space $\Theta$ be all pairs of square invertible matrices $\theta = (\W_2, \W_1)$ for $\W_2, \W_1 \in GL(d)$, and let $f_\theta(x) = \W_2 \sigma (\W_1 x)$. If $\sigma$ has no linear equivariances, then $f_{\theta_1} = f_{\theta_2}$ if and only if $\theta_1 = \theta_2$. In other words, there are no nontrivial parameter space symmetries.
\end{proposition}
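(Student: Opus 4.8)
The plan is to prove both directions, noting that the backward implication ($\theta_1 = \theta_2 \Rightarrow f_{\theta_1} = f_{\theta_2}$) is immediate, so the content lies entirely in the forward direction. Suppose $\theta_1 = (\W_2, \W_1)$ and $\theta_2 = (\W_2', \W_1')$ are two parameter choices in $\Theta$ with $f_{\theta_1} = f_{\theta_2}$, i.e. $\W_2 \,\sigma(\W_1 x) = \W_2' \,\sigma(\W_1' x)$ for every $x \in \RR^d$. The goal is to deduce $\W_2 = \W_2'$ and $\W_1 = \W_1'$.

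First I would reduce this functional identity to a linear equivariance statement for $\sigma$ by isolating $\sigma$. Since $\W_2 \in GL(d)$, left-multiplying by $\W_2^{-1}$ gives $\sigma(\W_1 x) = (\W_2^{-1}\W_2')\,\sigma(\W_1' x)$ for all $x$. Since $\W_1' \in GL(d)$, I would then substitute $x = (\W_1')^{-1} y$ for an arbitrary $y \in \RR^d$. Setting $A := \W_1 (\W_1')^{-1}$ and $B := \W_2^{-1}\W_2'$ — both in $GL(d)$, being products of invertible matrices — this yields $\sigma(A y) = B\,\sigma(y)$ for all $y \in \RR^d$, that is, $\sigma \circ A = B \circ \sigma$. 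Now I would invoke the hypothesis that $\sigma$ has no linear equivariances: this identity with $A, B \in GL(d)$ forces $A = B = I$. Unwinding the definitions, $A = I$ gives $\W_1 (\W_1')^{-1} = I$, hence $\W_1 = \W_1'$, and $B = I$ gives $\W_2^{-1}\W_2' = I$, hence $\W_2 = \W_2'$. Therefore $\theta_1 = \theta_2$.

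This argument is essentially a change of variables, and I do not expect a serious technical obstacle; the only point requiring care is verifying that the constructed maps $A$ and $B$ genuinely lie in $GL(d)$ so that the no-equivariance hypothesis applies, which is guaranteed precisely because $\Theta$ was defined to consist of pairs of invertible matrices. It is worth emphasizing in the write-up that this is exactly where the restriction to $GL(d)$ is needed: without invertibility of $\W_1'$ the substitution $x \mapsto (\W_1')^{-1} y$ is unavailable, and without invertibility of $\W_2$ one cannot isolate $\sigma$ on the left to begin with. (A remark could note that relaxing to non-invertible weights would reintroduce symmetries, e.g. acting on the kernel/cokernel of the weight matrices, which is why the clean statement is restricted to $GL(d)$.)
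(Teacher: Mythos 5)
Your proof is correct and takes essentially the same approach as the paper: isolate $\sigma$ by inverting one of the outer weight matrices, change variables through the inverse of one of the inner weight matrices, and read off a linear equivariance $\sigma \circ A = B \circ \sigma$ that must be trivial by hypothesis. The only difference is a cosmetic choice of which pair of matrices to invert (the paper uses $\widetilde{\W}_2^{-1}$ and $\W_1^{-1}$ where you use $\W_2^{-1}$ and $(\W_1')^{-1}$), which yields the same conclusion.
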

\begin{proof}
    If $\theta_1 = \theta_2$, then clearly $f_{\theta_1} = f_{\theta_2}$. For the other direction, suppose $f_{\theta_1} = f_{\theta_2}$, and denote $\theta_1 = (\W_2, \W_1)$ and $\theta_2 = (\widetilde \W_2, \widetilde \W_1)$. Then for any input $z \in \RR^n$, we have
    \begin{align}
        \W_2 \sigma(\W_1 z) & = \widetilde \W_2 \sigma(\widetilde \W_1 z)\\
        \widetilde \W_2^{-1} \W_2 \sigma(\W_1 z) & = \sigma(\widetilde \W_1 z). \label{eq:z_equal}
    \end{align}
    Now, choose an arbitrary $x \in \RR^n$. We let $z$ in the above equation \eqref{eq:z_equal} be $\W_1^{-1} x$, so we have
    \begin{align}
        \widetilde \W_2^{-1} \W_2 \sigma(x) & = \sigma(\widetilde \W_1 \W_1^{-1} x).
    \end{align}
    This holds for any $x$, so $\widetilde \W_2^{-1} \W_2 \circ \sigma = \sigma \circ \widetilde \W_1 \W_1^{-1} $, i.e. we have found a linear equivariance of $\sigma$. Since $\sigma$ has no linear equivariances, 
    \begin{equation}
        \widetilde \W_2^{-1} \W_2 = I = \widetilde \W_1 \W_1^{-1},
    \end{equation}
    meaning that $\widetilde \W_2 = \W_2$ and $\widetilde \W_1 = \W_1$, i.e. $\theta_1 = \theta_2$, so we are done.
\end{proof}

\subsubsection{FiGLU nonlinearity proofs (Proposition~\ref{prop:figlu_no_perm_diag})}\label{appendix:figlu}

Now, we study the properties of our FiGLU nonlinearity $\sigma(x) = \eta(\Fmat x) \odot x$, where $\eta$ is the sigmoid function $\eta(x) = \frac{1}{1+e^{-x}}$. For proving Proposition~\ref{prop:figlu_no_perm_diag}, we want to prove that with probability 1 over samples of $\Fmat$, $\sigma$ has no permutation or diagonal equivariances.

We say that $\sigma$ has \textit{no permutation equivariances} if whenever $P_2 \circ \sigma = \sigma \circ P_1$ for permutation matrices $P_1$ and $P_2$, then $P_1 = P_2 = I$. Likewise, we say that $\sigma$ has \textit{no diagonal equivariances} if whenever $B \circ \sigma = \sigma \circ A$ for invertible diagonal matrices $A$ and $B$, then $A = B = I$.

We will show that these two properties hold for any $\Fmat$ that has no permutation symmetries and no zero entries. We say that $\Fmat$ has \textit{no permutation symmetries} if $P_2 \Fmat P_1 = \Fmat$ for permutation matrices $P_1$ and $P_2$ implies that $P_1 = P_2 = I$. Note that if $\Fmat$ has distinct entries, then it has no permutation symmetries. Thus, $\Fmat$ satisfies both of these conditions with probability 1, since the set of matrices with nondistinct entries or with at least one zero entry are of Lebesgue measure zero, so they have zero probability under the Gaussian distribution. We now proceed to show that $\sigma$ has no permutation or diagonal equivariances under these conditions on $\Fmat$.

\begin{proposition}
    If $\Fmat$ is a square matrix with no permutation symmetries, then $\sigma(x) = \eta(\Fmat x) \odot x$ has no permutation equivariances.
\end{proposition}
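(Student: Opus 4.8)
The plan is to reduce a hypothetical permutation equivariance of $\sigma$ to a permutation symmetry of $\Fmat$, which the hypothesis forbids. Suppose $P_2 \circ \sigma = \sigma \circ P_1$ for permutation matrices $P_1, P_2$. The first fact I would use is that a permutation matrix merely reorders coordinates, so $P_2(a \odot b) = (P_2 a)\odot(P_2 b)$ and $\eta(P_2 v) = P_2\,\eta(v)$ since $\eta$ acts entrywise. Hence $P_2\sigma(x) = \eta(P_2\Fmat x)\odot(P_2 x)$, and the equivariance hypothesis becomes $\eta(P_2\Fmat x)\odot(P_2 x) = \eta(\Fmat P_1 x)\odot(P_1 x)$ for all $x \in \RR^d$.

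Next I would substitute the standard basis vectors $x = e_k$. Since $e_k \odot v = v_k\,e_k$, the left side collapses to $\eta(\Fmat_{kk})\,P_2 e_k$ and the right side to $\eta(\Fmat_{\pi_1(k),\,\pi_1(k)})\,P_1 e_k$, where $\pi_1$ is the permutation underlying $P_1$. Both are \emph{nonzero} scalar multiples of standard basis vectors because the sigmoid $\eta$ is strictly positive, so $P_1 e_k$ and $P_2 e_k$ must be the same basis vector for every $k$; therefore $P_1 = P_2 =: P$.

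I would then return to the simplified identity $\eta(P\Fmat x)\odot(Px) = \eta(\Fmat P x)\odot(Px)$. For any $x$ all of whose coordinates are nonzero, $Px$ also has all coordinates nonzero, so I can cancel the common Hadamard factor coordinatewise; since $\eta$ is injective (strictly increasing) this gives $P\Fmat x = \Fmat P x$. Such $x$ form a dense subset of $\RR^d$ and both sides are linear, hence continuous, in $x$, so $P\Fmat = \Fmat P$, equivalently $P^\top \Fmat P = \Fmat$. Invoking the assumption that $\Fmat$ has no permutation symmetries then forces $P = I$, so $P_1 = P_2 = I$, which is exactly the assertion that $\sigma$ has no permutation equivariances.

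The main (and essentially only) obstacle is the cancellation in the last step: the factor $Px$ can be divided out only on the open dense set where it has no zero entry, so one must extend the resulting linear identity from that set to all of $\RR^d$. This is handled at once by density plus linearity/continuity of both sides. Everything else is routine bookkeeping with permutation matrices together with the two elementary properties that $\eta$ is strictly positive and strictly increasing.
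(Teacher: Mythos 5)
Your proof is correct and follows essentially the same route as the paper's: both first plug in the standard basis vectors and use strict positivity of the sigmoid to force $P_1 = P_2$, then cancel the Hadamard factor on vectors with no zero entries and use injectivity of $\eta$ to obtain a matrix identity, and finally invoke the ``no permutation symmetries'' hypothesis on $\Fmat$. The only cosmetic difference is at the very end, where you pass from the identity on the dense set of nowhere-zero vectors to all of $\RR^d$ via density and continuity, whereas the paper simply observes that such vectors already contain a basis.
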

\begin{proof}
   Suppose $\sigma \circ P_1 = P_2 \circ \sigma$ for permutation matrices $P_1, P_2$. We will show that $P_1 = P_2 = I$. For any input $x$, we have
   \begin{align}
       \eta(\Fmat P_1 x) \odot P_1 x & = P_2\left[\eta(\Fmat x) \odot x \right]\\
       P_2^\top \left[\eta(\Fmat P_1 x) \odot P_1 x\right] & = \eta(\Fmat x) \odot x \\
       \eta(P_2^\top \Fmat P_1 x) \odot P_2^\top P_1 x & = \eta(\Fmat x) \odot x, \label{eq:permutation_equality}
   \end{align}
   where we used permutation equivariance of $\eta$, which acts elementwise. Let $x = e_i$, the standard basis vector that is $1$ in the $i$th coordinate and $0$ elsewhere. If $i$ is not a fixed point of the permutation $P_2^\top P_1$, then let $j$ be the index that it is mapped to. Then equation \eqref{eq:permutation_equality} gives that
   \begin{align}
       \eta(P_2^\top \Fmat P_1 e_i) \odot P_2^\top P_1 e_i & = \eta(\Fmat e_i) \odot e_i\\
       \eta(P_2^\top \Fmat P_1 e_i) \odot e_j & = \eta(\Fmat e_i) \odot e_i.
   \end{align}
   In the $i$th coordinate of this equality of vectors, we see that $\eta(\Fmat e_i) = 0$, which is impossible, since $\eta$ is the sigmoid function. Thus, $i$ cannot be a fixed point of $P_2^\top P_1$, so $P_2^\top P_1=I$ is the identity permutation. Now, let $x$ be an arbitrary vector with no zero entries. Equation \eqref{eq:permutation_equality} gives that
   \begin{align}
       \eta(P_2^\top \Fmat P_1 x) \odot  x & = \eta(\Fmat x) \odot x.
   \end{align}
   Since $x$ is nonzero, we can divide by $x_i$ in the $i$th coordinate of this vector equality for each $i$ to get that
   \begin{equation}
       \eta(P_2^\top \Fmat P_1 x)  = \eta(\Fmat x).
   \end{equation}
   As $\eta$ is bijective,
   \begin{equation}
       P_2^\top \Fmat P_1 x  = \Fmat x.
   \end{equation}
   Because this holds for all $x$ with no zero entries (and in particular for a basis of the input space), we know that
   \begin{equation}
       P_2^\top \Fmat P_1  = \Fmat
   \end{equation}
   as matrices. But since $\Fmat$ has no permutation symmetries, we have that $P_1 = P_2 = I$, so we are done.
\end{proof}

\begin{proposition}
    If $\Fmat$ is a square matrix with no zero entries, then $\sigma(x) = \eta(\Fmat x) \odot x$ has no diagonal equivariances.
\end{proposition}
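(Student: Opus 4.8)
The plan is to argue coordinatewise, exactly in the spirit of the preceding proof for permutation equivariances, exploiting that $\eta$ is elementwise, continuous, injective, and that $\Fmat$ has no zero entries. Suppose $B \circ \sigma = \sigma \circ A$ for invertible diagonal matrices $A = \mathrm{diag}(a_1,\dots,a_d)$ and $B = \mathrm{diag}(b_1,\dots,b_d)$; the goal is to show $A = B = I$. Writing the identity $B(\eta(\Fmat x)\odot x) = \eta(\Fmat A x)\odot A x$ in the $i$th coordinate yields
\begin{equation}
  b_i\, \eta\big((\Fmat x)_i\big)\, x_i \;=\; a_i\, \eta\big((\Fmat A x)_i\big)\, x_i \qquad \text{for all } x \in \RR^d ,
\end{equation}
and for every $x$ with $x_i \neq 0$ we may cancel $x_i$.

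First I would pin down that $B = A$. Restricting the display above to $x = t\, e_i$ with $t \neq 0$ gives $b_i\,\eta(t F_{ii}) = a_i\,\eta(t a_i F_{ii})$, where $(\Fmat t e_i)_i = F_{ii} t$ and $(\Fmat A t e_i)_i = a_i F_{ii} t$. Letting $t \to 0$ and using continuity of $\eta$ at $0$ together with $\eta(0) = 1/2 \neq 0$ forces $b_i = a_i$. Since this holds for every $i$, we get $B = A$, and in particular $a_i \neq 0$ for all $i$.

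Next, with $b_i = a_i \neq 0$, cancelling $a_i$ from the coordinate identity (for $x_i \neq 0$) and invoking injectivity of the sigmoid gives $(\Fmat x)_i = (\Fmat A x)_i$, i.e.
\begin{equation}
  \sum_{j=1}^d F_{ij}\,(1 - a_j)\, x_j \;=\; 0 \qquad \text{whenever } x_i \neq 0 .
\end{equation}
The left-hand side is a fixed linear functional of $x$ that vanishes on the open dense set $\{x \in \RR^d : x_i \neq 0\}$, hence vanishes identically on $\RR^d$ by continuity; evaluating at $x = e_j$ gives $F_{ij}(1 - a_j) = 0$, and since $F_{ij} \neq 0$ this forces $a_j = 1$ for every $j$. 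Therefore $A = I$ and $B = A = I$, so $\sigma$ has no diagonal equivariances.

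The only step needing any care is the passage from ``the linear functional vanishes wherever $x_i \neq 0$'' to ``it vanishes everywhere,'' since $\{x : x_i \neq 0\}$ is not a subspace and one cannot simply substitute a basis; the density-plus-continuity remark above settles it cleanly. (Alternatively, for $d \ge 2$ one can fix $x_i = 1$, let a single other coordinate $x_k$ vary over $\RR$, and read off $F_{ik}(1-a_k) = 0$ directly, then repeat with a different row index to cover the remaining coordinate $a_i$.) Everything else is the same coordinate-chasing used for the permutation case, so I expect no genuine obstacle.
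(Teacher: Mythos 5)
Your proof is correct, and it takes a genuinely different route from the paper's. Both proofs start by substituting $x = t e_i$ and reducing to a scalar identity in the $i$th coordinate, but they diverge from there. The paper establishes $a_i > 0$ and $a_i/b_i = 1$ by analyzing the asymptotics of the ratio $\eta(t F_{ii})/\eta(a_i t F_{ii})$ as $t \to \pm\infty$, a step that needs a short sign-chasing argument (and, as written, silently requires choosing the direction of the limit according to the sign of $F_{ii}$); you instead send $t \to 0$ and use only continuity of $\eta$ and $\eta(0) = \tfrac12 \neq 0$, which produces $b_i = a_i$ immediately with no case analysis. The paper then finishes while staying on the ray $x = t e_i$, applying injectivity of $\eta$ to get $a_i F_{ii} = F_{ii}$ and hence $a_i = 1$ from $F_{ii} \neq 0$; you instead return to a general $x$, linearize via injectivity of $\eta$ to obtain a linear functional $\sum_j F_{ij}(1-a_j)x_j$ vanishing on a dense set, and read off $a_j = 1$ for all $j$ from the nonvanishing of one full row of $\Fmat$. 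In effect the paper's argument only needs the diagonal of $\Fmat$ to be nonzero, whereas yours needs one full row; both conditions are subsumed by the stated hypothesis. Your $t\to 0$ step is a cleaner way to pin down $B = A$, and your density-plus-continuity remark for the dense-set-to-everywhere passage is exactly right.
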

\newcommand{\Diag}{\mathrm{Diag}}
\begin{proof}
    Let $A = \Diag(\alpha)$ and $B = \Diag(\beta)$ be invertible diagonal matrices, and suppose that $\sigma \circ A = B \circ \sigma$. We will show that $A = B = I$. For any input $x$,  we have
    \begin{align}
        \eta(\Fmat [\alpha \odot x]) \odot (\alpha \odot x) & = \beta \odot \left[\eta(\Fmat x) \odot x \right].
    \end{align}
    Let $x = c e_i$, where $e_i$ is the $i$th standard basis vector and $c \neq 0$ is any nonzero number. Then
    \begin{align}
        \eta(\Fmat c\alpha_i e_i) \odot c \alpha_i e_i & = \beta \odot \left[\eta(c\Fmat e_i) \odot c e_i \right].
    \end{align}
    At the $i$th coordinate of this equality, we have
    \begin{align}
        \eta(\Fmat c\alpha_i e_i)_i  c \alpha_i  & = \beta_i  \eta(c\Fmat e_i)_i c\\
          \frac{\alpha_i}{\beta_i}  & = \frac{\eta(c\Fmat e_i)_i}{\eta(\alpha_i c \Fmat e_i)_i} \label{eq:fraction_sigmoid_equality}
    \end{align}
    Thus, the right hand side is constant in $c$. We must have that $\alpha_i > 0$, because if not, then increasing $c$ would increase either the numerator or denominator and decrease the other, hence contradicting the equality (here we use that $\Fmat$ has no zero entries, so $c\Fmat e_i$ is nonzero in every entry). Thus, letting $c \to \infty$, we see that $\frac{\alpha_i}{\beta_i} = 1$, so $\alpha_i = \beta_i$. Plugging this back into Equation~\eqref{eq:fraction_sigmoid_equality}, we have
    \begin{align}
          1  & = \frac{\eta(c\Fmat e_i)_i}{\eta(\alpha_i c \Fmat e_i)_i} \\
            \eta(\alpha_i c \Fmat e_i)_i & = \eta(c\Fmat e_i)_i \\
            \alpha_i c (\Fmat e_i)_i & = c(\Fmat e_i)_i \\
            \alpha_i  & = 1,
    \end{align}
    where in the third line we used the fact that $\eta$ is invertible. We have shown that $\alpha_i = \beta_i = 1$ for each $i$, so $A = B = I$ and we are done.
\end{proof}

We note that the proofs of these two results about FiGLU are reminiscent of some proof techniques from \citet{godfrey2022symmetries}, such as those used in their analysis of $\gelu$ nonlinearities.

\subsection{Proofs for Universal Approximation}\label{appendix:universality_proofs}

Here, we prove the universal approximation result for our $\W$-Asymmetric MLPs.

\newcommand{\ftarget}{f_{\mathrm{target}}}
\newcommand{\fmlp}{f_{\mathrm{MLP}}}
\begin{theorem}
    Let $\eta$ be any nonpolynomial elementwise nonlinearity with $\eta(x) - \eta(-x) = x$ (e.g. $\relu, \gelu, \swish$), let $\Omega \subseteq \RR^D$ be a compact domain, and let $\ftarget: \Omega \to \RR$ be a continuous target function. Fix $\varepsilon > 0$ and $\delta > 0$.
    
    There exists a width $n'$ such that for all $n>n'$, with probability $1 - \delta$, for a randomly sampled 4-layer $\W$-Asymmetric MLP $f$ with $\eta$ nonlinearity, hidden dimensions $24n \rightarrow  n \rightarrow 24n$, and $\nfix \in o(n^{1/4})$ hardwired entries per neuron, there will exist $\theta \in \Theta$ such that the $\W$-Asymmetric MLP $f: \RR^n \to \RR$ approximates $\ftarget$ to $\varepsilon$: 
    \begin{equation}
    \norm{f_{\theta}([x; 0]) - \ftarget(x)} < \varepsilon \text{ for all } x \in \Omega.
    \end{equation}
    Importantly, we require that the input to $f_{\theta}$ be padded with $n-D$ zeroes, so $[x; 0] \in \RR^n$.
\end{theorem}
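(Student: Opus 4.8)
The plan is to reduce to the classical universal approximation theorem for shallow networks and then show that, despite its hard-wired entries, the depth-$4$ $\W$-Asymmetric architecture can reproduce a shallow approximant \emph{exactly}. First I would invoke the classical fact that a one-hidden-layer network with a nonpolynomial activation is a universal approximator: there exist $K\in\NN$, vectors $a_1,\dots,a_K\in\RR^D$, and scalars $c_0,b_1,c_1,\dots,b_K,c_K$ such that $g(x):=c_0+\sum_{k=1}^{K}c_k\,\eta(a_k^\top x+b_k)$ satisfies $|g(x)-\ftarget(x)|<\varepsilon$ for all $x\in\Omega$. From here $K$ and all these constants are fixed (depending only on $\ftarget,\Omega,\varepsilon,\eta$), and $n'$ will be chosen large relative to $K$ and $D$.

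Next I would exhibit an explicit parameter setting realizing $g$ exactly, using layers $1$ and $2$ to \emph{transport} the $D$ genuine input coordinates through the net and layers $3$--$4$ to run the shallow approximant. In layer $1$, for each $k\le D$ pick (via trainable weights) a neuron outputting $\eta(x_k)$ and a companion outputting $\eta(-x_k)$. In layer $2$, for each $k\le D$ pick a neuron whose trainable weights hit those two layer-$1$ neurons with coefficients $+1,-1$, so that by the hypothesis $\eta(z)-\eta(-z)=z$ its pre-activation equals $x_k$ and it outputs $\eta(x_k)$, plus a companion outputting $\eta(-x_k)$. In layer $3$, for each $k\le K$ pick a neuron that combines the layer-$2$ pairs into $\sum_j(a_k)_j\big(\eta(x_j)-\eta(-x_j)\big)=a_k^\top x$ and uses its bias to add $b_k$, outputting $\eta(a_k^\top x+b_k)$; finally the single layer-$4$ neuron forms $\sum_k c_k(\cdot)+c_0$. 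Every neuron not used this way is set (zero trainable weights, bias chosen appropriately) to output the constant $\eta(0)$. The key observations: because the padding is zero, every hard-wired entry in layer $1$ multiplies a zero coordinate and is inert; and in a deeper layer a hard-wired entry pointing into a neuron with constant output contributes a fixed, known constant, which the receiving neuron's bias cancels. Hence the construction gives $f_\theta([x;0])=g(x)$ \emph{exactly}, provided every ``useful'' neuron in layers $2$, $3$, $4$ has all $\nfix$ of its hard-wired incoming connections pointing only at constant-output predecessors; the theorem then follows from $|g-\ftarget|<\varepsilon$.

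The remaining --- and main --- work is to check that a uniformly random choice of masks makes that proviso satisfiable with probability $\ge 1-\delta$ once $n>n'$. Call a neuron \emph{varying} if it is one of the useful neurons above or has a hard-wired connection into a varying neuron of the previous layer, and \emph{constant} otherwise. In layer $1$ a neuron is varying only if it is one of the $2D$ useful ones or one of its $\nfix$ hard-wired slots (among $n$) lands on a true input coordinate, so with high probability only $O(\nfix D)$ neurons are varying and $\Theta(n)$ ``good'' neurons remain to host the useful $2D$. Iterating --- each layer has width $\Theta(n)$ and the useful set has size $O(D)$ or $O(K)$ --- with high probability layer $\ell$ has $O(\nfix^{\ell}D)$ varying neurons ($O(\nfix^3 D+K)$ at layer $3$). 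A neuron is a valid choice of useful neuron precisely when its $\nfix$ hard-wired slots miss every varying predecessor, which fails with probability at most $\nfix$ times (number of varying predecessors) over (layer width); since the varying-predecessor counts are $O(\nfix D),\,O(\nfix^2 D),\,O(\nfix^3 D)+K$ at layers $2,3,4$ while the widths are $\Theta(n)$, a Chernoff bound leaves $(1-o(1))\,\Theta(n)$ valid candidates at layers $2$ and $3$ (far more than the $O(D)$ and $K$ needed) and makes the single layer-$4$ output neuron valid with probability $1-O(\nfix^4 D/n+\nfix K/n)\to 1$ since $\nfix^4/n\to 0$. Taking $n'$ large makes the union of these $O(1)$ events fail with probability $\le\delta$ uniformly for all $n>n'$.

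I expect that last step to be the real obstacle: one must set up the ``varying vs.\ constant'' dichotomy so it is closed under the forward map, track how the number of varying neurons compounds ($\nfix\to\nfix^2\to\nfix^3$ as depth increases), and verify the width blow-up still leaves enough uncorrupted neurons to host both the transport gadget and the $K$ approximation units at every layer. The binding inequality is that the lone output neuron's $\nfix$ hard-wired slots avoid the $O(\nfix^3 D)$ varying layer-$3$ neurons among $\Theta(n)$ candidates, i.e.\ $\nfix^4/n\to 0$ --- which is exactly the hypothesis $\nfix\in o(n^{1/4})$, the exponent $\tfrac14$ reflecting the depth $4$.
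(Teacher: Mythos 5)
Your construction is correct and takes a genuinely different route from the paper's. Both proofs reduce to classical universal approximation for shallow MLPs and exploit $\eta(z)-\eta(-z)=z$ to pass exact linear maps through a nonlinear layer, but they realize the shallow approximant in the $\W$-Asymmetric net very differently. The paper lifts each $m\times m$ linear map of a width-$m$ shallow MLP into a pair of Asymmetric layers via a 24-fold row redundancy with explicit error correction: the noise from hard-wired entries in $B'$ is collected into correction vectors $\mathbf{c},\mathbf{d}$ that are cancelled by the remaining trainable rows; this works whenever each block of $24$ mask rows has at most one ``intersecting pair,'' an event that fails with probability $O(\nfix^4/n)$. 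Your argument is instead a sparse-support construction: only $O(D)$ useful neurons in layers $1$ and $2$ (transporting the $D$ genuine inputs) and $K$ in layer $3$ carry signal, all others are frozen to $\eta(0)$ by zeroing trainable weights and absorbing hard-wired offsets into biases, and you control the growth of the ``varying'' set through depth ($O(\nfix D)\to O(\nfix^2 D)\to O(\nfix^3 D)$), concluding that the lone output neuron escapes it with probability $1-O(\nfix^4 D/n)$. Both land on $\nfix^4/n\to 0$ but for structurally different reasons: the paper's exponent $4$ is a second-moment bound on intersecting pairs inside a single Asymmetric linear map, whereas yours is the literal depth $4$, reflecting geometric compounding of the varying set. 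What yours buys: it is conceptually lighter, never uses the factor-$24$ width blowup, and makes clear that this factor in the statement is incidental rather than essential for universality. What the paper's buys: it is essentially deterministic once one bad event is excluded, and it shows the $24\times$ blowup is enough to exactly fit a full-rank $n\times n$ linear map, not merely a constant-width shallow approximant. The issue you flag yourself --- maintaining the ``varying vs.\ constant'' dichotomy consistently across the sequential choice of useful neurons, and the small-mean concentration case that gives $O(\nfix^\ell D + \log n)$ rather than $O(\nfix^\ell D)$ varying neurons --- does require care, but does not change the exponent, since $\nfix^3\log n/n\to 0$ under $\nfix\in o(n^{1/4})$.
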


\setcounter{MaxMatrixCols}{40}
\subsubsection{Proof sketch}

To approximate $\ftarget$ to $\varepsilon > 0$, we will leverage the universal approximation for standard MLPs with nonlinearity $\eta$ to first obtain a standard 2-layer MLP that approximates $\ftarget$ to within $\varepsilon$, meaning $\norm{\ftarget(x) - \fmlp(x)} < \varepsilon$ for all $x \in \Omega$. Then we will exactly represent $\fmlp$ using an Asymmetric Network $f$.

This will be done by approximating each linear map $W$ of $\fmlp$ by two layers of an Asymmetric network: $W'_2 \circ \eta \circ W'_1 = W$ for Asymmetric linear maps $W'_2$ and $W'_1$. For the sake of exposition, we will show how to do this first when both $W'_2$ and $W'_1$ have no Asymmetric mask (i.e. fitting a linear map $W$ using a standard two-layer $\eta$-MLP), then when only $W'_2$ has an Asymmetric mask, and finally when both $W'_2$ and $W'_1$ have an Asymmetric mask.

\subsubsection{Fitting a Linear Map with a Two-Layer Standard MLP}\label{appendix:standard_standard_to_linear}
Let $W \in \RR^{n \times n}$ be the target linear map, and let $B \in \RR^{2n \times n}$ and $A \in \RR^{n \times 2n}$ be parameters of a two-layer MLP, defined by $f_{A,B}(x) = A \eta(Bx)$. We will choose $A$ and $B$ such that $f_{A,B}(x) = W x$ for all $x \in \RR^n$.

Denote the $i$th row of $W$ by $W_i$, so that
\begin{equation}
    W = \begin{bmatrix}
    W_0\\
    \vdots \\
    W_{n-1}
\end{bmatrix}, \qquad Wx = \begin{bmatrix}
    W_0 \cdot x\\
    \vdots \\
    W_{n-1} \cdot x\\
\end{bmatrix}
\end{equation}

We set $A$ and $B$ as follows, where $I_n$ is the $n \times n$ identity matrix:
\begin{equation}
    A = I_n \otimes \begin{bmatrix} 1 & -1 \end{bmatrix} = \begin{bmatrix}
        1 & -1\\
          &    & 1 & -1\\
          &  &  & \ddots & \ddots \\
          &  & &  & 1 & -1
    \end{bmatrix} \qquad B = \begin{bmatrix}
        W_0\\
        -W_0\\
        W_1\\
        -W_1\\
        \vdots\\
        W_{n-1}\\
        -W_{n-1}
    \end{bmatrix}.
\end{equation}
Then we can see that $f_{A, B}$ exactly computes the linear transformation $Wx$.
\begin{equation}
    A\eta(Bx) = \begin{bmatrix}
        \eta(W_0 \cdot x) - \eta(-W_0 \cdot x)\\
        \vdots \\
        \eta(W_{n-1} \cdot x) - \eta(-W_{n-1} \cdot x)\\
    \end{bmatrix} = \begin{bmatrix}
        W_0 \cdot x\\
        \vdots \\
        W_{n-1} \cdot x\\
    \end{bmatrix} = Wx.
\end{equation}

\subsubsection{Fitting a Linear Map with One Asymmetric and One Standard Linear Map}\label{appendix:asym_standard_to_linear}
Let $\nfix > 0$ and let each row of $\purp{N} \in \{0,1\}^{n \times 6n}$ have $\nfix$ entries equal to 0, selected at random. Let $B \in \RR^{6n \times n}$ and $A \in \RR^{n \times 6n}$. Define $A'$ to be an Asymmetric linear map: $A' = A\odot \purp{N} + (1-\purp{N}) \odot \purp{P}$, where $\purp{N}$ is a randomly sampled binary mask, and $\purp{P}$ a randomly sampled Gaussian matrix. We consider a two-layer network with one Asymmetric and one standard linear map: $f_{A,B}(x) = A' \eta(B x)$. We want $f_{A,B}(x) = W x$ for all x. For the remainder of this proof, we will assume that $\purp{N}$ never has three consecutive entries in a row set to zero; we will later show that this holds with high probability over the sampling of $\purp{N}$.

First, we define $B$ in a similar way to the purely linear setting, but with additional copies of entries to allow for error correction of the random noisy entries fixed in $A'$.
\begin{equation}
B = \begin{bmatrix}
    W_0\\
    W_0\\
    W_0\\
    -W_0\\
    -W_0\\
    -W_0\\
    \vdots \\
    W_{n-1}\\
    W_{n-1}\\
    W_{n-1}\\
    -W_{n-1}\\
    -W_{n-1}\\
    -W_{n-1}\\
\end{bmatrix}.
\end{equation}

Recall that each row $A_i'$ of $A'$ has $\nfix$ entries that are randomly hardwired to constants. Ideally, we would want $A_i'$ to pick out $\eta(W_i\cdot x) - \eta(-W_i \cdot x) = W_i \cdot x$, but because of the hardwired constants, $A_i'$ might randomly add $c * \eta(W_j\cdot x)$. However, since there are three copies of $\eta(W_j\cdot x)$ in $\eta(Bx)$, as long as not all three corresponding entries in $A_i'$ are fixed, one of the un-fixed copies can be changed such that the coefficients sum to $0$. Since by our assumption $\purp{N}$ never has three consecutive entries all set to 0, the coefficients of $A$ can be picked such that $A' \eta(B x) = W x$. For example, a possible $A'$ matrix would be
{\scriptsize
\begin{equation*}
A' =  \begin{bmatrix}
    1 & 0 & 0 & -1 & 0 & 0 & \vert & \purp{1.1} & \purp{.37} & -1.47 & 0 & 0 & 0 & \vert \dots & \vert & 0 & 0 & 0 & 0 & 0 & 0\\
    .89 & \purp{-.89} & 0 & 0 & 0 & 0 & \vert & \purp{.37} & .63 & 0 & -1 & 0 & 0 & \vert \dots & \vert & 0 & 0 & 0 & 0 & 0 & 0\\
    0 & 0 & 0 & 0 & 0 & 0 & \vert & 0 & 0 & 0 & 0 & 0 & 0 & \vert \dots & \vert & 1 & 0 & 0 & -1 & 0 & 0\\
    \end{bmatrix}
\end{equation*}
}

Thus we have shown that under the assumption that $\purp{N}$ never has three consecutive entries equal to 0, $A$ can be picked such that $A' \eta(B x) = W  x$. We will now show that $\PP(\purp{N} \text{ never has three consecutive entries equal to 0})$ can be made arbitrarily small by increasing the width $n$ while keeping $\nfix$ to be $o(n^{1/3})$.

The probability there are three consecutive entries in a given row of $\purp{N}$ that are zero is $O(\frac{\nfix^3}{n^2})$. By the union bound, the probability that any row has 3 consecutive hardwired entries is $O(\frac{\nfix^3}{n})$. For any $\nfix \in o(n^{1/3})$, this tends towards $0$. Thus, with probability $\geq 1- O(\frac{\nfix^3}{n})$, $A$ can be picked such that $A' \eta(B x) = W x$. 

\subsubsection{Fitting a Linear Map with Two Asymmetric Linear Maps}\label{appendix:asym_asym_to_linear}
Once again let $\nfix > 0$, and let each row of $\purp{M}$ have $\nfix$ entries equal to 0, selected at random. Let $B \in \RR^{24n \times n}$ and $A \in \RR^{n \times 24n}$. Further, define Asymmetric maps 
\begin{equation}
A' = A \odot \purp{N} + (1-\purp{N}) \odot \purp{P}, \qquad B' = B \odot \purp{M} + (1 - \purp{M})\odot \purp{Q} ,
\end{equation}
where $\purp{N}, \purp{M}$ are randomly sampled masks, and $\purp{P}, \purp{Q}$ are normal matrices. Then we let $f_{A, B}(x) = A' \eta(B'(x))$, and we once again desire choices of parameters $A$ and $B$ such that $f_{A, B}(x) = Wx$.

\subsubsection*{Constructing $B$}

Consider the randomly drawn mask $\purp{M} \in \{0, 1\}^{24n \times n}$, and denote the $i$th row by $M_i$.  
\begin{equation}
\purp{M} =  \begin{bmatrix}
    M_0\\
    M_1\\
    \vdots \\
    M_{24n-1}
    \end{bmatrix}
\end{equation}
 where $M_i \in \{0,1\}^{n}$. We partition $\purp{M}$'s rows into $n$ blocks of $24$ rows. $\beta_1 = \{M_0 \dots M_{23}\}, \ldots \beta_i = \{M_{24i} \dots M_{24i+23}\}$. Now, consider $\beta_1$, the first $24$ rows of $\purp{M}$. 
\begin{definition}
    We say two rows $M_j,\  M_k$ are \textbf{intersecting} if there is some column index $\alpha$ such that $M_{j,\alpha} = M_{k,\alpha} = 0$. That is, two rows of are intersecting if they share a 0 at the same index.
\end{definition}
Note that for any two given rows of $M$, the probability that they share a $0$ in the same location is $\leq \frac{\nfix^2}{n}$. 

We assume that $\beta_i$ contains no more than one pair of intersecting rows; later, we show this to hold with high probability. Then, every $\beta_i$ can be broken into two disjoint sets of 12 rows, $\beta_{i,0}$ and $\beta_{i,1}$, such that neither set of 12 contains a single pair of intersecting rows. Intuitively, this means that each row in $B$ corresponding to $\beta_{i,0}$ will have unique fixed indices.

Our goal will be for the rows in $\beta_i$ to mimic the row $W_i$. We will show how to do this for each $i$. Fix an arbitrary index $i \in \{0, \ldots, n-1\}$.

Without loss of generality, assume $\beta_{i, 0}$ and $\beta_{i, 1}$ are continguous, so $\beta_{i,0} = M_{24i:24i+11}$ and $\beta_{i,1}=M_{24i+11:24i+23}$. By our assumption, for $j, k \in \beta_{i,0}$  (i.e. $j, k \in \{0, \ldots, 11\}$), the mask rows $M_{24i+j}$ and $M_{24i+k}$ are never $0$ in the same two column indices. Similarly, for $j, k \in \beta_{i, 1}$ (i.e. $j, k \in \{12, \ldots, 23\}$), the mask rows $M_{24i+j}$ and $M_{24i+k}$ are never $0$ in the same two column indices.

Next, we define $c_{i,j}$ as the difference between $B'$ and $B$ in the $(24i+j)$th row:
\begin{equation}
    c_{i,j} = B'_{24i+j} - B_{24i + j}.
\end{equation}
 In particular, we have that 
 \begin{equation}
 c_{i,j} = -B_{24i+j} \odot (1-M_{24i+j}) + (1-M_{24i+j}) \odot Q_{24i + j}.
 \end{equation}

\begin{lemma}
    For any indices $j \neq k$ such that $j, k \in \beta_{i,0}$ or $j, k \in \beta_{i,1}$, we have that 
    \begin{equation}
    c_{i,j} \odot M_{24i+k} = c_{i,j}. 
    \end{equation}
\end{lemma}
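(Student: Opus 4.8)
The plan is to reduce this Hadamard-product identity to a disjointness statement about the zero-patterns of two mask rows, which is precisely what the standing ``no intersecting pairs within a half'' assumption provides.

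First I would pin down the support of the error vector. Expanding $B' = B \odot \purp{M} + (1-\purp{M}) \odot \purp{Q}$ along the $(24i+j)$-th row gives
\begin{equation}
c_{i,j} = B'_{24i+j} - B_{24i+j} = (1 - M_{24i+j}) \odot \left( \purp{Q}_{24i+j} - B_{24i+j} \right),
\end{equation}
so $(c_{i,j})_\alpha = 0$ whenever $M_{24i+j,\alpha} = 1$. Writing $Z_\ell = \{\alpha : M_{24i+\ell,\alpha} = 0\}$ for the zero-set of the $\ell$-th mask row of the block $\beta_i$, this says exactly that $c_{i,j}$ is supported inside $Z_j$.

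Next I would apply the assumption: since $j \neq k$ belong to the same half of $\beta_i$, and that half contains no intersecting pair, the rows $M_{24i+j}$ and $M_{24i+k}$ are never simultaneously zero in a column, i.e. $Z_j \cap Z_k = \emptyset$. The identity then follows coordinatewise. For a column $\alpha$ with $M_{24i+k,\alpha} = 1$ we trivially have $(c_{i,j} \odot M_{24i+k})_\alpha = (c_{i,j})_\alpha$; for a column $\alpha$ with $M_{24i+k,\alpha} = 0$ we have $\alpha \in Z_k$, hence $\alpha \notin Z_j$, hence $(c_{i,j})_\alpha = 0$ by the support bound, so both sides vanish. Therefore $c_{i,j} \odot M_{24i+k} = c_{i,j}$.

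I do not expect a genuine obstacle here; the one point to get right is that the support of $c_{i,j}$ is controlled by the $j$-th mask row rather than the $k$-th, after which the ``no intersecting pairs'' hypothesis does all the work. The role of the lemma downstream is that it lets the construction absorb the random hardwired entries of $B'$ row-by-row within a half independently of one another, since multiplying one row's error vector by another row's mask leaves it unchanged.
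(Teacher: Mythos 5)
Your argument is correct and is essentially the paper's own proof: both pin down the support of $c_{i,j}$ to the zero-set of $M_{24i+j}$, invoke the ``no intersecting pair within a half'' assumption to conclude that this zero-set is disjoint from that of $M_{24i+k}$, and then check the Hadamard identity coordinatewise. You have merely made the support computation and the coordinatewise case split slightly more explicit than the paper does.
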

\begin{proof}
By the definition of $c_{i,j}$, we know that $c_{i,j}$ is only nonzero at indices where  $M_{24i+j}$ is equal to zero. 
Since $j, k$ are either both in $\beta_{i,0}$ or $\beta_{i,1}$, we know that $M_{24i+k}$ cannot also be zero at indices where $M_{24i+j}$ is zero. Thus, $M_{24i + k}$ is equal to $1$ at every index where $c_{i,j}$ is nonzero, so $c_{i,j} \odot M_{24i+k} = c_{i,j}$ as desired.
\end{proof}

\newcommand{\bc}{\mathbf{c}}
\newcommand{\bd}{\mathbf{d}}
Next, we construct $B$, by constructing this block of 24 rows. Let $[\bc_{i,0}, \bc_{i,1}, \bc_{i,2}, \bc_{i,3}]$ be continguous sums of length-3 segments of $c_{i,:}$:
\begin{align}
    \bc_{i,0} & = c_{i,0} + c_{i,1} + c_{i,2}\\
    \bc_{i,1} & = c_{i,3} + c_{i,4} + c_{i,5}\\
    \bc_{i,2} & = c_{i,6} + c_{i,7} + c_{i,8}\\
    \bc_{i,3} & = c_{i,9} + c_{i,10} + c_{i,11}
\end{align}
We assign the first 12 rows of $B$ as follows.
\begin{align}
 (0\leq j < 3)  & \rightarrow B_{24i + j} = W_i  + \bc_{i,0} -  \bc_{i,1} +  \bc_{i,2} -  \bc_{i,3} - c_{ij} \\
 (3\leq j < 6)  & \rightarrow B_{24i + j} = - W_i  - \bc_{i,0} +  \bc_{i,1} -  \bc_{i,2} +  \bc_{i,3} - c_{ij}\\
(6\leq j < 9)  & \rightarrow  B_{24i + j} =   + \bc_{i,0} -  \bc_{i,1} +  \bc_{i,2} - \bc_{i,3} - c_{ij} \\
(9\leq j < 12)  & \rightarrow  B_{24i + j} =   - \bc_{i,0} +  \bc_{i,1} -  \bc_{i,2} +  \bc_{i,3} - c_{ij}
\end{align}

 Defining $\bc = \bc_{i,0} - \bc_{i,1} + \bc_{i,2} - \bc_{i,3}$, we have the nice property:
 \begin{align}
(0\leq j < 3) & \rightarrow  B'_{24i + j} = W_i  + \bc \\
(3\leq j < 6) & \rightarrow  B'_{24i + j} = -W_i  - \bc \\
(6\leq j < 9) & \rightarrow  B'_{24i + j} =  + \bc\\
(9\leq j < 12) & \rightarrow B'_{24i + j}  = -\bc
\end{align}

By the construction above, $B_{24i}' = - B_{24i+3}'$, and $B_{24i+6}' = -B_{24i+9}'$. This means that 
\begin{equation}
\eta(B_{24i}' \cdot x) - \eta(B_{24i+3}'\cdot x) = B_{24i}' \cdot x = (W_i + \bc) \cdot x
\end{equation}
and likewise that
\begin{equation}
\eta(B_{24i+6}' \cdot x) - \eta(B_{24i+9}' \cdot x) = \bc \cdot x
\end{equation}
So that a simple linear map gives our desired output:
\begin{align}
\eta(B_{24i}' \cdot x) - \eta(B_{24i+3}'\cdot x) - [\eta(B_{24i+6}' \cdot x) - \eta(B_{24i+9}' \cdot x)] & = (W_i+\bc - \bc) \cdot x\\
& = W_i \cdot x.
\end{align}
In the next part, we will construct $A'$ to compute this linear map, which will follow the method of Appendix~\ref{appendix:asym_standard_to_linear} (because $A'$ has certain fixed entries).

What remains is to define the rows of $B$ corresponding to $\beta_{i,1}$ in an error correctible manner. This can be done easily by defining
\begin{align}
    \bd = \sum_{j=12}^{23}c_{ij}
\end{align}
and then defining 
\begin{align}
    (12 \leq j < 24) \rightarrow &B_{24i + j} = \bd - c_{ij}
\end{align}
By similar reasoning to before, this means that 
\begin{equation}
    (12 \leq j < 24) \rightarrow B'_{24i+j} = \bd.
\end{equation}

Recall that we constructed $B'$ under the assumption that no $\beta_i$ has at most one pair of intersecting rows.
We now show that the $\beta_i$ each have at most one pair of intersecting rows with high probability. Within the $24$ rows of any given $\beta_i$,  the probability that more than one pair of rows are intersecting is $\leq C \frac{\nfix^4}{n^2}$ for some constant $C$. So, by the union bound, the probability over $M$ that any of the $\beta_i$ have more than one pair of intersecting rows is $\leq C  \frac{\nfix^4}{n}$. Thus, we can construct $B$ in this manner with probability $\geq 1 - C  \frac{\nfix^4}{n}$. For sufficiently large $n$ and $\nfix \in o(n^{1/4})$, this probability approaches $1$.

\subsubsection*{Construction of $A$}

With our above construction, each block of the 24 rows in $\beta_i$ of $\eta(B' x)$ is of the form 
\begin{equation}
\begin{bmatrix}
    \eta((W_i + \bc)\cdot x)\\
    \eta((W_i + \bc)\cdot x)\\
    \eta((W_i + \bc)\cdot x)\\

    \eta((-W_i - \bc)\cdot x)\\
    \eta((-W_i - \bc)\cdot x)\\
    \eta((-W_i - \bc)\cdot x)\\

    \eta(\bc \cdot x)\\
    \eta(\bc \cdot x)\\
    \eta(\bc \cdot x)\\

    \eta(-\bc \cdot x)\\
    \eta(-\bc \cdot x)\\
    \eta(-\bc \cdot x)\\
    
    \eta(\bd\cdot x)\\
    \eta(\bd\cdot x)\\
    \eta(\bd\cdot x)\\

    \eta(\bd\cdot x)\\
    \eta(\bd\cdot x)\\
    \eta(\bd\cdot x)\\

    \eta(\bd\cdot x)\\
    \eta(\bd\cdot x)\\
    \eta(\bd\cdot x)\\

    \eta(\bd\cdot x)\\
    \eta(\bd\cdot x)\\
    \eta(\bd\cdot x)\\
    
\end{bmatrix}
\end{equation}

Importantly, each row here is $n$ wide. Recall that $A' \in \RR^{n\times 24n}.$ Denote the $i$th row of $A'$ by $A_i' \in \RR^{24n}$  with $A_i' \in \RR^{24n}$. If $A'$ had 0 hardwired entries, then setting $A_i = \mathbbm{1}_{24i} - \mathbbm{1}_{24i+3} - (\mathbbm{1}_{24i+6} - \mathbbm{1}_{24i+9})$ would give $A_i \eta(B'\cdot x) = W_i \cdot x$, by the same argument as in Appendix~\ref{appendix:standard_standard_to_linear}.

 Unfortunately this is not the case, so we have to use the construction in Appendix~\ref{appendix:asym_standard_to_linear}. Recall, that $A'$ has $\nfix$ fixed entries in each row. This means that $N_i$ has $\nfix$ entries equal to $0$.  Since every entry of $B' x$ has three copies, as long as $N_i$ does not have three elements set to $0$ in a row, $A_i'$ can be made equivalent to $A_i = \mathbbm{1}_{24i} - \mathbbm{1}_{24i+3} - (\mathbbm{1}_{24i+6} - \mathbbm{1}_{24i+9})$. This is because, as in Appendix~\ref{appendix:asym_standard_to_linear}, if at most $2$ elements out of any $3$ three copies are hardwired, then the third can be changed arbitrarily to offset the hardwiring.

 Further, just as in Appendix~\ref{appendix:asym_standard_to_linear}, the probability that a given row of $A'$ has three items hardwired in a row is $O(\frac{\nfix^3}{n^2})$. Thus, by the union bound, the probability that some row of $A'$ has three items hardwired in a row is $O(\frac{\nfix^3}{n})$. So, with large enough width $n$, $A$ can be chosen such that $A' \eta(B'x) = Wx$.

Similarly, any linear map in $\RR^{\tilde n \times n}$ for $\tilde n < n$ can also be fit using this method.
\subsubsection*{Conclusion}

We have shown that a $\W$-Asymmetric MLP with hidden dimension $24n$ can exactly fit an $n \times n$ linear map with high probability over the choice of Asymmetric masks $M$.
It is known by \cite{cybenko1989approximation} that for any continuous function $\ftarget: \Omega \subseteq \RR^D \rightarrow \RR$ and any $\varepsilon > 0$, there exists a width $k'$ such that 2-layer MLPs of width $k'$ can approximate $f$ to within $\varepsilon$. 
\newline

Let $k$ be sufficiently big so that the probability that the masks do not satisfy the conditions of Appendix~\ref{appendix:asym_asym_to_linear}  is less than $\delta$. Such a $k$ exists as long as $\nfix \in o(k^{\frac{1}{4}})$. Let $m \geq \max(k, k', D)$.

Importantly, if a 2-layer MLP of width $k'$ can approximate $\ftarget$ to within $\varepsilon$, a 2-layer MLP of width $m$ with $m \geq k'$ can also approximate $f$ to within $\varepsilon$.
Let $\fmlp$ be a width $m$ MLP that approximates $\ftarget$ to within $\varepsilon$.

We now pad the input $x$ to $\ftarget$, with $m-D$ zeros. This allows us to define a new function $\ftarget^0 : \RR^{m} \rightarrow \RR$ by $\ftarget^0([x;0]) = f(x)$. Clearly $\ftarget^0$ can also be approximated by a width $m$ MLP. 

Let $\fmlp^0$ denote the width $m$ MLP that approximates $f_0$ to within $\varepsilon$. Now, $\fmlp^0$ has dimensions $m \rightarrow  m \rightarrow 1$, with corresponding linear maps $W_1 \in \RR^{m \times m},\ W_2 \in \RR^{1 \times m}$. Each of these maps can be exactly fit using a 2-layer $\W$-Asymmetric MLP, since their corresponding matrices have at least as many columns as rows. Concatenating these two exact fits yields an asymmetric MLP whose output exactly matches $\fmlp^0$ and thus approximates $f_0$ to within $\varepsilon$.

Thus, setting $n' = m$, there exists a width $n'$ such that for all $n>n'$, with probability $1 - \delta$, for a randomly sampled 4-layer $\W$-Asymmetric MLP $f$ with $\eta$ nonlinearity, hidden dimensions $24n \rightarrow  n \rightarrow 24n$, and $\nfix \in o(n^{1/4})$ hardwired entries per neuron, there will exist $\theta \in \Theta$ such that the $\W$-Asymmetric MLP $f: \RR^n \to \RR$ approximates $\ftarget$ to $\varepsilon$.

\subsubsection*{On parameter symmetries of the 4-layer $\W$-Asymmetric Network}

As an aside, this procedure of mapping 2-layer standard MLPs to 4-layer $\W$-Asymmetric MLPs implies that these 4-layer $\W$-Asymmetric MLPs have at least as many symmetries as 2 layer standard MLPs. To fix this, we may want to consider a nonlinearity $\eta$ such that $\eta(x) - \eta(-x) \neq x$.

\section{Limitations}\label{appendix:limitations}

Although our $\W$-Asymmetric and $\sigma$-Asymmetric networks are motivated by removing parameter space symmetries, their distinct empirical behavior may be caused by other factors besides just parameter space symmetries. For instance, the fixed entries $\Fmat$ for the $\W$-Asymmetric approach are taken to be much larger than the standard initialization of linear maps, which could cause several changes to optimization and loss landscapes besides just parameter symmetry breaking.

Also, our theoretical results could be strengthened by future work in several ways. For instance, for the $\sigma$-Asymmetric approach, Proposition~\ref{prop:sigma_symmetry} only gives a guarantee of no parameter symmetries in the two-layer network case with square invertible weights. Future work could also give tighter analysis of the required width and depth for universal approximation using our $\W$-Asymmetric architecture.

\section{Broader Impacts}\label{appendix:broader_impacts}

This work does not focus on any particular application area. Instead, we study fundamental phenomena and theory of deep learning in general. Our work has potential to improve known deficits of neural networks: by making neural network loss landscapes more similar to convex landscapes, we can improve our understanding of them, and by improving Bayesian neural networks we advance one paradigm for bettering uncertainty quantification in neural networks. However, unlike standard neural networks, which have millions of papers studying them, we have only scratched the surface of Asymmetric networks. Important properties such as generalization, robustness to distribution shifts, and adversarial robustness have not been extensively studied for Asymmetric networks, and the interaction of parameter symmetries with these properties is not clear. Future research should further explore these important properties.

\section{Experimental Ablations}\label{appendix:ablations}

Here, we present experiments with various ablations. These were in-part suggested by anonymous reviewers for the NeurIPS 2024 conference (thanks!).

\subsection{Matching Learnable Parameters}

\begin{table}
\centering
\caption{Number of learnable parameters of Standard / $\sigma$-Asym and $\W$-Asym nets for our experiments.}
\begin{tabular}{lrr}
\toprule
Experiment / Architecture & Standard / $\sigma$-Asym & W-Asym \\
\midrule
\ref{sec:lmc} MLP & 935,434 & 834,570 \\
\ref{sec:lmc} ResNet 1x & 272,474 & 230,024 \\
\ref{sec:lmc} ResNet 8x & 17,289,866 & 16,273,946 \\
\ref{sec:lmc} GNN & 176,424 & 171,576 \\
\ref{sec:bnn} MLP-8 & 3,242,146 & 3,324,466 \\
\ref{sec:bnn} MLP-16 & 5,960,242 & 5,796,002 \\
\ref{sec:bnn} ResNet-20 1x & 1,356,098 & 1,143,858  \\
\ref{sec:bnn} ResNet-20 2x & 5,410,386 & 5,044,756  \\
\ref{sec:bnn} ResNet-110 1x & 8,620,418 & 7,371,378  \\
\ref{sec:bnn} ResNet-110 2x & 34,512,276 & 32,014,996  \\
\ref{sec:bnn} ResNet-20 2x & 5,410,386 & 5,044,756  \\
\ref{sec:metanet} ResNet & 78,042 & 60,634\\
\ref{sec:mli} MLI ResNet & 78,042 & 60,634 \\
\bottomrule
\end{tabular}
\label{tab:num_params}
\end{table}

\begin{table}[ht]
    \centering
    \caption{Metanetwork performance, including results with smaller standard networks (Smaller ResNet) at same number of parameters at $\W$-Asym. There is no substantial difference --- $\W$-Asym ResNets are still substantially easier to predict performance of. }
    {\small
    \begin{tabular}{lrccccccc}
        \toprule
         & \multicolumn{2}{c}{ResNet} && \multicolumn{2}{c}{Smaller ResNet}  &&  \multicolumn{2}{c}{$\W$-Asym ResNet}\\
         \cmidrule(lr){2-3}  \cmidrule(lr){5-6} 
         \cmidrule(lr){8-9}
          & \multicolumn{1}{c}{$R^2$} & $\tau$ && $R^2$ & $\tau$ && $R^2$ & $\tau$ \\
         \midrule
         MLP & $.330\pm.04$ & $.389\pm .03$  && $.348\pm.07$ & $.400\pm.02$ && $\mathbf{.594}\pm.12$ &  $\mathbf{.864}\pm.01$ \\
         DMC & $.950\pm.01$ & $.787\pm.02$ && $.943\pm.01$ & $.779\pm.01$ && $\mathbf{.967}\pm.01$ & $\mathbf{.911}\pm.01$  \\
         DeepSets & $.855\pm.01$ & $.617\pm.03$ && $.849\pm.01$ & $.627\pm.01$ && $\mathbf{.936}\pm.00$ & $\mathbf{.858}\pm.00$  \\
         StatNN & $.976\pm.00$ & $.866\pm.00$ && $.976\pm.00$ & $.869\pm.00$ && $\mathbf{.978}\pm.00$ & $\mathbf{.935}\pm.01$ \\
         \bottomrule
    \end{tabular}
    }
    \label{tab:metanet_smaller}
\end{table}

\begin{table}[ht]
    \centering
    \caption{Bayesian NN test accuracy after 25 epochs. Decreasing standard ResNet20 parameters to match that of $\W$-Asym ResNet20 does not substantially change performance.}
    \begin{tabular}{cc}
    \toprule
    Base Network & Test Accuracy \\
    \midrule
    $\W$-Asym ResNet20 & $\mathbf{49.3} \pm 0.4$ \\
    Standard ResNet20 & $46.8 \pm 0.9$ \\
    Smaller Standard ResNet20 & $46.5 \pm 1.1$\\
    \bottomrule
    \end{tabular}
    \label{tab:bnn_smaller}
\end{table}

For the experiments in the main text, our $\W$-Asym networks often had less learnable parameters than the standard networks that they were compared against (since we take them to have the same architecture, but the $\W$-Asym approach fixes certain learnable parameters to constants). See Table~\ref{tab:num_params} for the learnable parameter counts.

In this section, we control for this, by decreasing the width of the standard networks so that they have the same number of parameters as the $\W$-Asymmetric networks they are compared against. Results are shown in Tables~\ref{tab:metanet_smaller} and \ref{tab:bnn_smaller}. We see that there is little to no change in metanetwork performance or Bayesian neural network performance for these smaller standard networks, so our results from the main text all still hold.

\subsection{Changing Number of Warmup Steps}

    \begin{table}[ht]
        \centering
        \caption{Test loss barrier when changing warmup steps. Results are very similar when lowering number of warmup epochs ($\W$-Asym interpolates significantly better than Git-ReBasin). Adam optimizer with learning rate 1e-2 (ResNet20) and 1e-3 (GNN) is used.}
        {\small
        \begin{tabular}{ccc|cc}
        \toprule
             & ResNet20 (ReBasin) & $\W$-Asym ResNet20 & GNN (ReBasin) & $\W$-Asym GNN \\
             \midrule
             1 Epoch Warmup & $4.2\pm.80$ & $\mathbf{.673}\pm.29$ & $.249\pm.04$ & $.075\pm.04$ \\
             20 Epoch Warmup & $2.0 \pm .21$ & $.934 \pm .72$ &  $.292\pm.04$ & $\mathbf{.074} \pm .02$ \\
             \bottomrule
        \end{tabular}
        }
        \label{tab:vary_warmup}
    \end{table}

\citet{altintas2023disentangling} showed that amount of learning rate warmup can affect the extent of linear mode connectivity between training runs.
In Table~\ref{tab:vary_warmup}, we see that varying the number of warmup epochs between 1 and 20 does not change the qualitative results much on test loss barrier for our linear mode connectivity experiments.

\subsection{Failure to Break Symmetries by Fixing Biases}
    
\begin{table}[ht]
        \centering
        \caption{Loss barrier when fixing biases to attempt to break symmetries for standard ResNet20 on CIFAR-10. Biases are between $[-k, k]$. Barriers are much larger than for $\W$-Asymmetric Nets ($.934\pm.72$) and $\sigma$-Asym Nets ($2.521\pm.46$).}
        \begin{tabular}{cc}
        \toprule
            $k$ & Loss Barrier \\
            \midrule
            1 & $5.81 \pm 3.67$\\
            3 & $3.76 \pm 0.38$\\
            9 & $4.62 \pm 1.03$\\
            27 & $10.6 \pm 4.29$\\
            \bottomrule
        \end{tabular}
        \label{tab:fix_biases}
\end{table}

Intuitively, one way to break permutation symmetries of an MLP is to order the hidden neurons in some way. One possible way to do this is to fix the biases (so that they are untrained). As shown in Table~\ref{tab:fix_biases}, a basic attempt at this fails, and has a much larger test barrier than our $\W$-Asym and $\sigma$-Asym networks.

\section{Experimental Details}\label{appendix:experiment_details}

\subsection{Linear Mode Connectivity Experimental Details}

\subsubsection{Image Classifier Interpolation}

For the image classification experiments, we use two types of models.
\begin{enumerate}
    \item \textbf{ResNet} We train ResNet20s with LayerNorm of width $64$ and $8 \cdot 64$. We use a batch size of 128 and a learning rate that warms up from $.0001$ to $.01$ over $20$ epochs. In the width $8\times$ multiplier case we train for 50 epochs, and in the width $1\times$ multiplier case we train for 100. For $\sigma$-Asymmetric ResNets, we warm up to a learning rate of $.001$ instead of $.01$ due to training instability.
    \item \textbf{MLP} We train MLPs with 4 layers, LayerNorm, and width 512. For MNIST we tuned the hyperparameters (epochs, learning rate, weight decay) of both the Asymmetric and Standard models to minimize loss barrier. We use a batch size of 64.
\end{enumerate}
For MNIST we use no data augmentation, and for CIFAR-10 we use random cropping and horizontal flipping. For the Git-ReBasin tests, we use the weight matching algorithm from \cite{ainsworth2023git}.
For MLPs on MNIST, we used the Asymmetry hyperparameters in Table~\ref{tab:asymmetric_specsMLP}. Table~\ref{tab:asymmetric_specsRES1} gives the Asymmetric hyperparameters for ResNet20 on CIFAR-10, and Table~\ref{tab:asymmetric_specsRES8} lists the same for ResNet20 with $8$x larger width.
\begin{table}[!ht]
    \centering
    \caption{$\W$-Asymmetric network hyperparameters for depth 4 MLPs. $\nfix$ refers to the number of weights we randomly fix per neuron. $\kappa$ refers to the standard deviation of the normal distribution that the fixed entries $\Fmat$ are drawn from.}
    {\small
    \begin{tabular}{lcc}
    \toprule
          Layer & $\nfix$ & $\kappa$\\
         \midrule
         Linear-1 & 64 & 1\\
         Linear-2 & 64 & 1\\
         Linear-3 & 64 & $\frac{1}{2}$\\
         Linear-4 & 256 & $\frac{1}{4}$\\
         \bottomrule
    \end{tabular}
    }
    \label{tab:asymmetric_specsMLP}
\end{table}
\begin{table}[!ht]
    \centering
    \caption{$\W$-Asymmetric network hyperparameters for ResNet20s with width multiplier 1. $\nfix$ refers to the number of weights we randomly fix per output channel (for convolutional layers) or neuron (for linear layers). $\kappa$ refers to the standard deviation of the normal distribution that the fixed entries $\Fmat$ are drawn from.}
    {\small
    \begin{tabular}{lcc}
    \toprule
          Block & $\nfix$ & $\kappa$\\
         \midrule
         First Conv & 12 & 2\\
         Block 1 - Conv & 36 & 2\\
         Block 1 - Skip & 4 & 2\\
         Block 2 - Conv & 54 & 2\\
         Block 2 - Skip & 6 & 2\\
         Block 3 - Conv & 72 & 2\\
         Block 3 - Skip & 8 & 2\\
         Linear & 8 & 2\\
         \bottomrule
    \end{tabular}
    }
    \label{tab:asymmetric_specsRES1}
\end{table}

\begin{table}[!ht]
    \centering
    \caption{$\W$-Asymmetric network hyperparameters for ResNet20s with width multiplier 8 on CIFAR-10. We use 3 times more fixed entries per output channel or neuron than for Table~\ref{tab:asymmetric_specsRES1}.}
    {\small
    \begin{tabular}{lcc}
    \toprule
          Block & $\nfix$ & $\kappa$\\
         \midrule
         First Conv & 27 & 2\\
         Block 1 - Conv & 108 & 2\\
         Block 1 - Skip & 12 & 2\\
         Block 2 - Conv & 162 & 2\\
         Block 2 - Skip & 18 & 2\\
         Block 3 - Conv & 216 & 2\\
         Block 3 - Skip & 24 & 2\\
         Linear & 24 & 2\\
         \bottomrule
    \end{tabular}
    }
    \label{tab:asymmetric_specsRES8}
\end{table}

\subsubsection{Graph Neural Network Interpolation}

For the GNN experiments, we use a GNN architecture similar to GIN~\citep{xu2019how} with mean aggregation. The base GNN has three message passing layers and a hidden dimension of 256, which gives 176,424 trainable parameters. The dataset is ogbn-arXiv~\citep{hu2020open}, which is a citation network of computer science arXiv papers with 169,343 nodes and 1,166,243 edges. The task is transductive node classification, where the label of each paper node is the primary subject area of the paper.

As is common in transductive node classification on modestly sized graphs, we train each network with full-batch gradient on the whole graph. Thus, the randomness in training is purely from the initialization --- there is no noise from minibatch selection in SGD. We use the Adam optimizer~\citep{kingma2014adam} with a peak learning rate of .001. The learning rate is linearly warmed up for 25 epochs to the peak, and then is held constant. Each network is trained for 500 epochs.

For the Git-ReBasin alignment, we implement the activation matching approach. For the $\sigma$-Asymmetric GNN, we take $\sigma$ to be FiGLU, in which we randomly initialize each fixed matrix $\Fmat$ as a standard normal matrix with standard deviation $.01 / \sqrt{d}$ where $d$ is the number of hidden channels; we found that having small standard deviation helped with training and interpolation. For the $\W$-Asymmetric GNN, we fix 6 constants in each row of each linear map, and randomly initialize these constants from a normal distribution with standard deviation $.5$.

\subsection{Bayesian Neural Network Experimental Details}

For training Bayesian neural networks, we use the variational inference approach of~\citet{tomczak2020elrg}, which fits an approximate posterior that is Gaussian with a diagonal plus rank-4 covariance matrix structure.
For the $\W$-Asym ResNet tests, we train ResNet20s with the same Asymmetric hyperparameters as in Table~\ref{tab:asymmetric_specsRES1}, though with $\kappa = .5$. For the CIFAR-100 experiments, we use a standard linear layer instead of hardwiring weights for the last fully-connected linear layer. On CIFAR-100 we also use a width multiplier of 2 for our ResNets. For the ResNet experiments, we use a learning rate of $.001$. We train with a batch size of 250 for 50 epochs.

For the MLP experiments, we use $\kappa=.5$, $8$ hardwired entries per neuron, and a learning rate of .0005. A batch size of 250 is used for 50 epochs again.

We use standard data augmentation (horizontal flips and random crops) on CIFAR-10 and CIFAR-100, and no data augmentations for MNIST. All training is done with the Adam optimizer \cite{kingma2014adam}.

\subsection{Metanetwork Experimental Details}\label{appendix:metanet_details}

\subsubsection{Dataset Details}
We trained two datasets of image classifiers on CIFAR-10: one consisting of 10,000 small ResNet-like convolutional neural networks, and one consisting of 10,000 networks with a similar architecture, that use our graph-based approach to removing parameter symmetries.
For fast training of many image classifiers, we use the FFCV package \citep{leclerc2023ffcv}. In particular, we use their CIFAR-10 sample script \url{https://github.com/libffcv/ffcv/tree/main/examples/cifar}, which includes data augmentation (random horizontal flips, random translations, and Cutout~\citep{devries2017improved}), label smoothing~\citep{szegedy2016rethinking}, and a linear learning rate warmup and decay. In total, training all 20,000 classifiers takes just under 400 GPU hours (about 2 GPU-weeks) on NVIDIA RTX 2080 Ti GPUs.

See Table~\ref{tab:hparams_cifar10} for the hyperparameters and ranges that we varied across the networks in our datasets. In each dataset, the trained networks all have the same architecture.

Each ResNet has 78,042 trainable parameters, and each $\W$-Asym ResNet has 60,634 trainable parameters. Both have the same architecture, except the $\W$-Asym ResNet has certain filters that are fixed to constants to break the parameter symmetries. The ResNets each have $8$ convolution layers, LayerNorm~\citep{ba2016layer}, and a final fully-connected linear classification layer after average pooling across spatial dimensions.

\begin{table}[ht]
    \centering
    \caption{Hyperparameters and distributions we sampled from for the datasets of image classifiers that we trained on CIFAR-10. $\mathrm{Unif}(a,b)$ is the uniform distribution over $[a,b]$, and $\mathrm{RandInt}(a,b)$ is the uniform distribution over integers in $[a,b]$ (inclusive of endpoints).}
    \begin{tabular}{lc}
    \toprule
         Hyperparameter & Distribution  \\
         \midrule
         Learning rate & $.5 \cdot 10^{-\mathrm{Unif}(0,2)}$ \\
         Weight decay & $10^{-\mathrm{Unif}(1,5)}$\\
         Label smoothing & $\mathrm{Unif}(0, .2)$\\
         Epochs & $\mathrm{RandInt}(10, 40)$\\
         \bottomrule
    \end{tabular}
    \label{tab:hparams_cifar10}
\end{table}

\subsubsection{Metanetwork Details}

\begin{table}[ht]
    \centering
    \caption{Learning rate and number of parameters for each type of metanetwork trained in Table~\ref{tab:metanet_results}.}
    \begin{tabular}{lrrcrr}
        \toprule
         & \multicolumn{2}{c}{ResNet} & &  \multicolumn{2}{c}{$\W$-Asym ResNet}\\
         \cmidrule(lr){2-3}  \cmidrule(lr){5-6} 
          & \multicolumn{1}{c}{LR} & \multicolumn{1}{c}{\# Params} && \multicolumn{1}{c}{LR} & \multicolumn{1}{c}{\# Params} \\
         \midrule
         MLP & $10^{-4}$ & $4{,}994{,}945$ && $10^{-4}$ & $3{,}880{,}833$\\
         DMC~\citep{eilertsen2020classifying} & $10^{-3}$ & $105{,}357$ && $5 \cdot 10^{-3}$ & $105{,}357$ \\
         DeepSets~\citep{zaheer2017deep} & $10^{-2}$ &  $8{,}897$ && $5 \cdot 10^{-3}$ & $8{,}897$ \\
         StatNN~\citep{unterthiner2020predicting} & $10^{-3}$ & $119{,}297$ && $10^{-2}$ & $119{,}297$ \\
         \bottomrule
    \end{tabular}
    \label{tab:metanet_hparams}
\end{table}

We trained several types of metanetworks for our experiments. All of these metanetworks are trained for 50 epochs using the AdamW optimizer~\citep{loshchilov2018decoupled}. For each metanetwork, on each dataset, we choose the learning rate in $\{10^{-5}, 10^{-4}, 5\cdot 10^{-4}, 10^{-3}, 5\cdot 10^{-3}, 10^{-2}\}$ that gives the best validation $R^2$ performance on one training run. Then we run train each type of metanetwork 5 times on each dataset, and report the mean and standard deviation for each metric in Table~\ref{tab:metanet_results}.

\subsection{Monotonic Linear Interpolation Experimental Details}

For the monotonic linear interpolation experiments, we used the same setup as in the training of the datasets of CIFAR-10 image classifiers in Section~\ref{sec:metanet}. For each architecture, we sample 300 sets of hyperparameters from the distributions in Table~\ref{tab:hparams_cifar10}, and train one network for each set of these sampled hyperparameters. When evaluating training loss, we include the labeling smoothing term.

For the $\sigma$-Asymmetric networks, we initialize the FiGLU $\Fmat$ with a standard deviation of $1 / \sqrt{d}$, where $d$ is the number of channels in the layer. Note that this is considerably larger than the standard deviation of $.01/\sqrt{d}$ used in the GNN experiments of Section~\ref{sec:lmc}; we found this setting to train better (note that this initialization is in line with standard initializations of trainable parameters). Further, for the $\sigma$-Asymmetric networks, $24$ out of the $300$ networks diverged during training (giving \texttt{NaNs}), so we exclude them from the computation of statistics in Table~\ref{tab:mli_results}. From manual inspection, this divergence seems to happen when the learning rate is high (greater than $.1$). In contrast, none of the standard or $\W$-Asymmetric networks diverged.

\subsection{Miscellaneous Experimental Details}\label{appendix:misc_experimental_details}

The datasets we use are MNIST~\citep{lecun1998gradient}, CIFAR-10~\citep{krizhevsky2009learning}, CIFAR-100~\citep{krizhevsky2009learning}, and ogbn-arXiv~\citep{hu2020open}, which are all widely used in machine learning research. The first three appear to not have licenses and are open to use, while the last dataset is from the Open Graph Benchmark, which has an MIT License in the Github repository.

We use software packages including PyTorch~\citep{paszke2019pytorch} (for all neural network experiments), FFCV~\citep{leclerc2023ffcv} (for building our dataset in Section~\ref{sec:metanet}), and PyTorch Geometric~\citep{fey2019fast} (for GNN experiments).

We ran our experiments on several types of NVIDIA GPUs and compute systems, including 2080 Ti, 3090 Ti, 4090 Ti, and V100 GPUs. Every training run was conducted on at most one GPU.

\section{arXiv Changelog}

Version 1 (May 30, 2024): Initial version.

Version 2 (June 20, 2024): Added some citations in the linear mode connectivity and Bayesian deep learning sections.

Version 3 (October 14 2024): Added link to code. Also added some additional experiments/ablations, and made various other small changes for the NeurIPS camera-ready version.

\end{document}